\title{Block-Normalized Gradient Method: An Empirical Study for Training Deep Neural Network}
\author[1]{Adams Wei Yu}
\author[2]{Lei Huang}
\author[3]{Qihang Lin}
\author[1]{Ruslan Salakhutdinov}
\author[1]{Jaime Carbonell}
\affil[1]{School of Computer Science, Carnegie Mellon University}
\affil[2]{Beihang University}
\affil[3]{Tippie College of Business, The University of Iowa}
\begin{document}

\maketitle

\begin{abstract}
In this paper, we propose a generic and simple strategy for utilizing stochastic gradient information in optimization. The technique essentially contains two consecutive steps in each iteration: 1) computing and normalizing each block (layer) of the mini-batch stochastic gradient; 2) selecting appropriate step size to update the decision variable (parameter) towards the negative of the block-normalized gradient. We conduct extensive empirical studies on various non-convex neural network optimization problems, including multi layer perceptron, convolution neural networks and recurrent neural networks. The results indicate the block-normalized gradient can help accelerate the training of neural networks.  In particular,
we observe that the normalized gradient methods having constant step size with occasionally decay, such as SGD with momentum, have better performance in the deep convolution neural networks, while those with adaptive step sizes, such as Adam, perform better in recurrent neural networks. Besides, we also observe this line of methods can lead to solutions with better generalization properties, which is confirmed by the performance improvement over strong baselines. 
\end{abstract}

\section{Introduction}\label{sec:intro}

Continuous optimization is a core technique for training non-convex sophisticated machine learning models such as deep neural networks \citep{bengio2009learning}. Compared to convex optimization where a global optimal solution is expected, non-convex optimization usually aims to find a stationary point or a local optimal solution of an objective function by iterative algorithms. 
Among a large volume of optimization algorithms, first-order methods, which only iterate with the gradient information of objective functions, are widely used due to its relatively low requirement on memory space and computation time, compared to higher order algorithms. In many machine learning scenarios with large amount of data, the full gradient is still expensive to obtain, and hence the unbiased stochastic version will be adopted, as it is even more computationally efficient.

In this paper, we are particularly interested in solving the deep neural 
network training problems with stochastic first order methods. 
Compared to other non-convex problems, deep neural network training additionally has the following challenge: 
gradient may be vanishing and/or exploding.
More specifically, 
due to the chain rule (a.k.a. backpropagation),
the original gradient in the low layers will become very small or very large because of the multiplicative effect of the gradients from the upper layers, which is usually all smaller or larger than 1.
As the number of layers in the neural network increases, the phenomenon of vanishing or exploding gradients becomes more severe such that the iterative solution will converge slowly or diverge quickly.

We aim to alleviate this problem 
by block-wise stochastic \textit{gradient normalization}, which is constructed via dividing the stochastic gradient by its norm. Here, each block essentially contains the variables of one layer in a neural network, so it can also be interpreted as layer-wise gradient normalization. 
Compared to the regular gradient, normalized gradient only provides an updating direction but does not incorporate the local steepness of the objective through its magnitude, which helps to control the change of the solution through a well-designed step length. Intuitively, as it constrains the magnitude of the gradient to be 1 (or a constant), it should to some extent prevent the gradient vanishing or exploding phenomenon. In fact, as showed in \citep{HazanLS15,Levy16a}, normalized gradient descent (NGD) methods are more numerically stable and have better theoretical convergence properties than the regular gradient descent method in non-convex optimization.

Once the updating direction is determined, step size (learning rate) is the next important component in the design of first-order methods. 
While for convex problems there are some well studied strategies to find a stepsize ensuring convergence,
for non-convex optimization, the choice of step size is more difficult and critical as it may either enlarge or reduce the impact of the aforementioned vanishing or exploding gradients. 

Among different choices of step sizes, the constant or adaptive \emph{feature-dependent} step sizes are widely adopted. 
On one hand, stochastic gradient descent (SGD) + momentum + constant step size has become the standard choice for training feed-forward networks such as Convolution Neural Networks (CNN). Ad-hoc strategies like decreasing the step size when the validation curve plateaus are well adopted to further improve the generalization quality. 
On the other hand, different from the standard step-size rule which multiplies a same number to each coordinate of gradient, the adaptive feature-dependent step-size rule multiplies different numbers to coordinates of gradient so that different parameters in the learning model can be updated in different paces. For example, the adaptive step size invented by \citep{DuchiHS11} is constructed by aggregating each coordinates of historical gradients. As discussed by \citep{DuchiHS11}, this method can dynamically incorporate the frequency of features in the step size so that frequently occurring coordinates will have a small step sizes while infrequent features have long ones. The similar adaptive step size is proposed in \citep{KingmaB14} but the historical gradients are integrated into feature-dependent step size by a different weighting scheme. 

In this paper, we propose a generic framework  using the mini-batch stochastic normalized gradient as the updating direction (like \citep{HazanLS15,Levy16a}) and the step size is either constant or adaptive to each coordinate as in \citep{DuchiHS11,KingmaB14}. 
Our framework starts with computing regular mini-batch stochastic gradient, which is immediately normalized layer-wisely. The normalized version is then plugged in the constant stepsize with occasional decay, such as SGD+momentum, or the adaptive step size methods, such as Adam~\citep{KingmaB14} and AdaGrad~\citep{DuchiHS11}.  The numerical results shows that normalized gradient always helps to improve the performance of the original methods especially when the network structure is deep. It seems to be the first thorough empirical study on various types of neural networks with this normalized gradient idea. Besides, although we focus our empirical studies on deep learning where the objective is highly non-convex, we also provide a convergence proof under this framework when the problem is convex and the stepsize is adaptive in the appendix. The convergence under the non-convex case will be a very interesting and important future work.



The rest of the paper is organized as follows.  In Section \ref{sec:rw}, we briefly go through the previous work that are related to ours. In Section \ref{sec:algo}, we formalize the problem to solve and propose the generic algorithm framework.  In Section \ref{sec:exp}, we conduct comprehensive experimental studies to compare the performance of different algorithms on various neural network structures. We conclude the paper in Section \ref{sec:conclusion}. Finally, in the appendix, we  provide a concrete example of this type of algorithm and show its convergence property under the convex setting.

\section{Related Work}\label{sec:rw}

A pioneering work on normalized gradient descent (NGD) method was by Nesterov \citep{Nesterov:NGD1984} where it was shown that NGD can find a $\epsilon$-optimal solution within $O(\frac{1}{\epsilon^2})$ iterations when the objective function is differentiable and quasi-convex. Kiwiel \citep{kiwiel2001convergence} and Hazan et al \citep{HazanLS15} extended NGD for upper semi-continuous (but not necessarily differentiable) quasi-convex objective functions and local-quasi-convex objective functions, respectively, and achieved the same iteration complexity. Moreover, Hazan et al \citep{HazanLS15} showed that NGD's iteration complexity can be reduced to  $O(\frac{1}{\epsilon})$ if the objective function is local-quasi-convex and locally-smooth. A stochastic NGD algorithm is also proposed by Hazan et al \citep{HazanLS15} which, if a mini-batch is used to construct the stochastic normalized gradient in each iteration, finds $\epsilon$-optimal solution with a high probability for locally-quasi-convex functions within $O(\frac{1}{\epsilon^2})$ iterations. Levy \citep{Levy16a} proposed a Saddle-Normalized Gradient Descent (Saddle-NGD) method, which adds a zero-mean Gaussian random noise to the stochastic normalized gradient periodically. When applied to strict-saddle functions with some additional assumption, it is shown \citep{Levy16a} that Saddle-NGD can evade the saddle points and find a local minimum point approximately with a high probability.

Analogous yet orthogonal to the gradient normalization ideas have been proposed for the deep neural network training. For example, 
batch normalization~\citep{IoffeS15} is used to address the internal covariate shift phenomenon in the during deep learning training. It benefits from making normalization
a part of the model architecture and performing the
normalization for each training mini-batch. Weight normalization~\citep{SalimansK16}, on the other hand, aims at a reparameterization of the weight vectors that decouples the length of those weight vectors from their direction. Recently \citep{NeyshaburSS15} proposes to use path normalization, an approximate path-regularized steepest descent with respect to a path-wise regularizer related to max-norm regularization to achieve better convergence than vanilla SGD and AdaGrad. Perhaps the most related idea to ours is Gradient clipping. It is proposed in \citep{PascanuMB13} to avoid the gradient explosion, by pulling the magnitude of a large gradient to a certain level. However, this method does not do anything when the magnitude of the gradient is small.

Adaptive step size has been studied for years in the optimization community. The most celebrated method is the line search scheme. However, while the exact line search is usually computational infeasible, the inexact line search also involves a lot of full gradient evaluation. Hence, they are not suitable for the deep learning setting. Recently,  algorithms with adaptive step sizes  start to be applied to the non-convex neural network training, such as AdaGrad~\citep{duchi2012dual}, Adam~\citep{KingmaB14} and RMSProp~\citep{HintonSS}. However they directly use the unnormalized gradient, which is different from our framework.  ~\cite{SinghDZGT15} recently proposes to apply layer-wise specific step sizes, which differs from ours in that it essentially adds a term to the gradient rather than normalizing it. Recently \cite{WilsonRSSR17} finds the methods with adaptive step size might converge to a solution with worse generalization. However, this is orthogonal to our focus in this paper.

\section{Algorithm Framework}\label{sec:algo}
In this section, we present our algorithm framework to solve the following general problem:
\begin{equation}
\label{eq:obj}
\min_{x\in \mathbb{R}^d} f(x)=\Eb(F(x, \xi)),
\end{equation}
where $x=(x^1,x^2,\dots,x^B)\in\mathbb{R}^d$ with $x^i\in\mathbb{R}^{d_i}$ and $\sum_{i=1}^Bd_i=d$, $\xi$ is a random variable following some distribution $\mathbb{P}$, $F(\cdot, \xi)$ is a loss function for each $\xi$ and the expectation $\Eb$ is taken over $\xi$. In the case where \eqref{eq:obj} models an empirical risk minimization problem, the distribution $\mathbb{P}$ can be the empirical distribution over training samples such that the objective function in  \eqref{eq:obj} becomes a finite-sum function. Now our goal is to minimize the objective function $f$ over $x$, where $x$ can be the parameters of a machine learning model when \eqref{eq:obj} corresponds to a training problem. Here, the parameters are partitioned into $B$ blocks. The problem of training a neural network is an important instance of \eqref{eq:obj}, where each block of parameters $x^i$ can be viewed as the parameters associated to the $i$th layer in the network.

We propose the generic optimization framework in Algorithm~\ref{algo:NGAS}. In iteration $t$, it firstly computes the partial (sub)gradient $F_i'(x_t,\xi_t)$ of $F$ with respect to $x^i$ for $i=1,2,\dots,$
at $x=x_t$ with a mini-batch data $\xi_t$, and then normalizes it to get a partial direction $g_t^i={F_i'(x_t,\xi_t) \over \|F_i'(x_t,\xi_t)\|_2}$. We define $g_t=(g_t^1,g_t^2,\dots,g_t^B)$. The next is to find $d$ adaptive step sizes $\tau_t\in\mathbb{R}^d$ with each coordinate of $\tau_t$ corresponding to a coordinate of $x$. We also partition $\tau_t$ in the same way as $x$
so that $\tau_t=(\tau_t^1,\tau_t^2,\dots,\tau_t^B)\in\mathbb{R}^B$ with $\tau_t^i\in\mathbb{R}^{d_i}$. We use $\tau_t$ as step sizes to update $x_t$ to $x_{t+1}$ as $x_{t+1}=x_t-\tau_t\circ g_t$, where $\circ$ represents coordinate-wise (Hadamard) product. In fact, our framework can be customized to most of existing first order methods with fixed or adaptive step sizes, such as SGD, AdaGrad\citep{DuchiHS11}, RMSProp~\citep{HintonSS} and Adam\citep{KingmaB14}, by adopting their step size rules respectively.

\begin{algorithm}[htb]
	\caption{Generic Block-Normalized Gradient (BNG) Descent}
	\label{algo:NGAS}
	\begin{algorithmic}[1]
        \STATE Choose $x_1\in\mathbb{R}^d$.
        \FOR {$t=1,2,...,$}
        \STATE Sample a mini-batch of data $\xi_t$ and compute the partial stochastic gradient $g_t^i={F_i'(x_t,\xi_t) \over \|F_i'(x_t,\xi_t)\|_2}$
        \STATE Let $g_t=(g_t^1,g_t^2,\dots,g_t^B)$ and choose step sizes $\tau_t\in\mathbb{R}^d$.
        \STATE $x_{t+1}=x_t-\tau_t \circ g_t$
        \ENDFOR
	\end{algorithmic}
\end{algorithm}


\begin{figure*}[btp]
\begin{center}
\includegraphics[width=0.32\columnwidth]{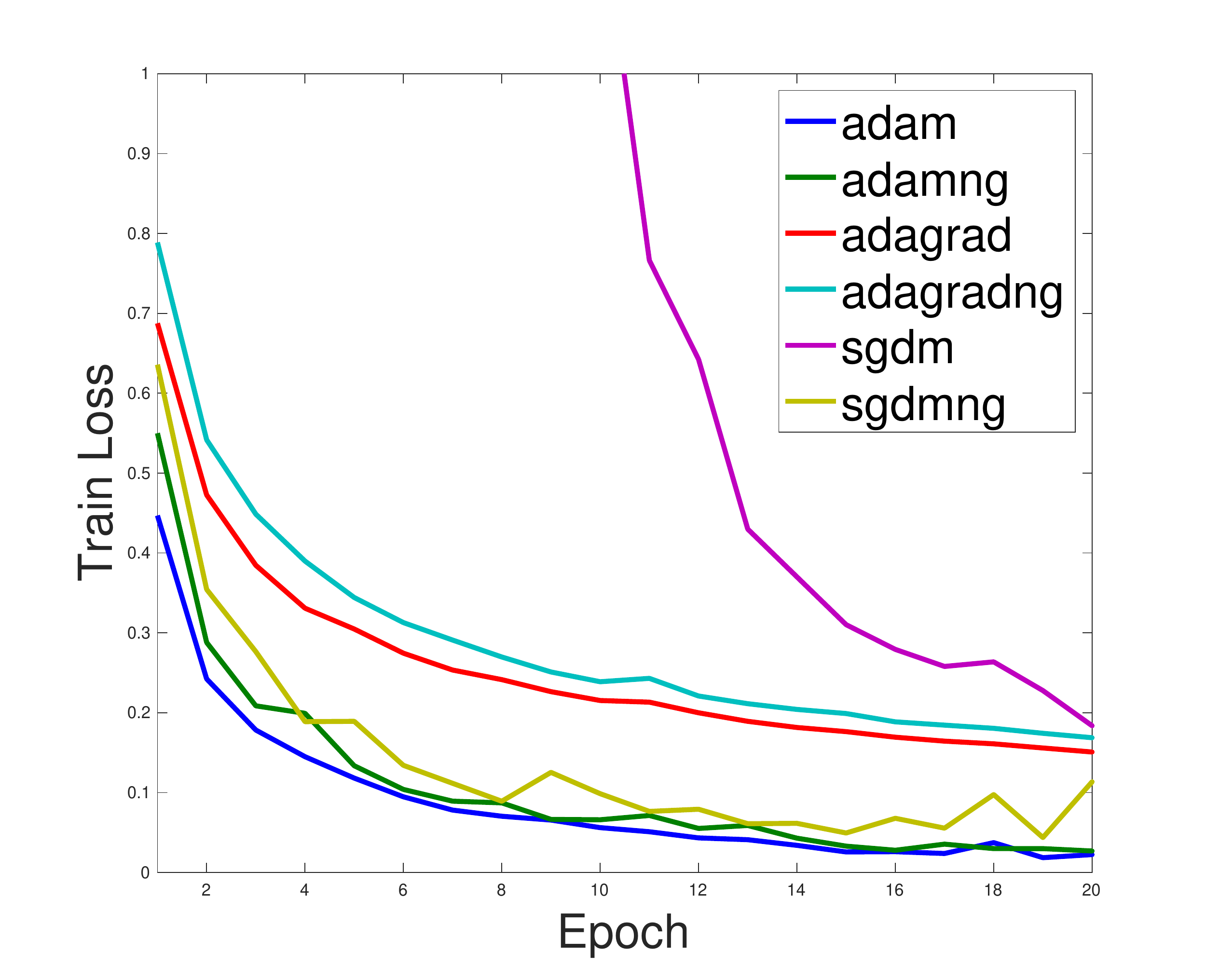}
\includegraphics[width=0.32\columnwidth]{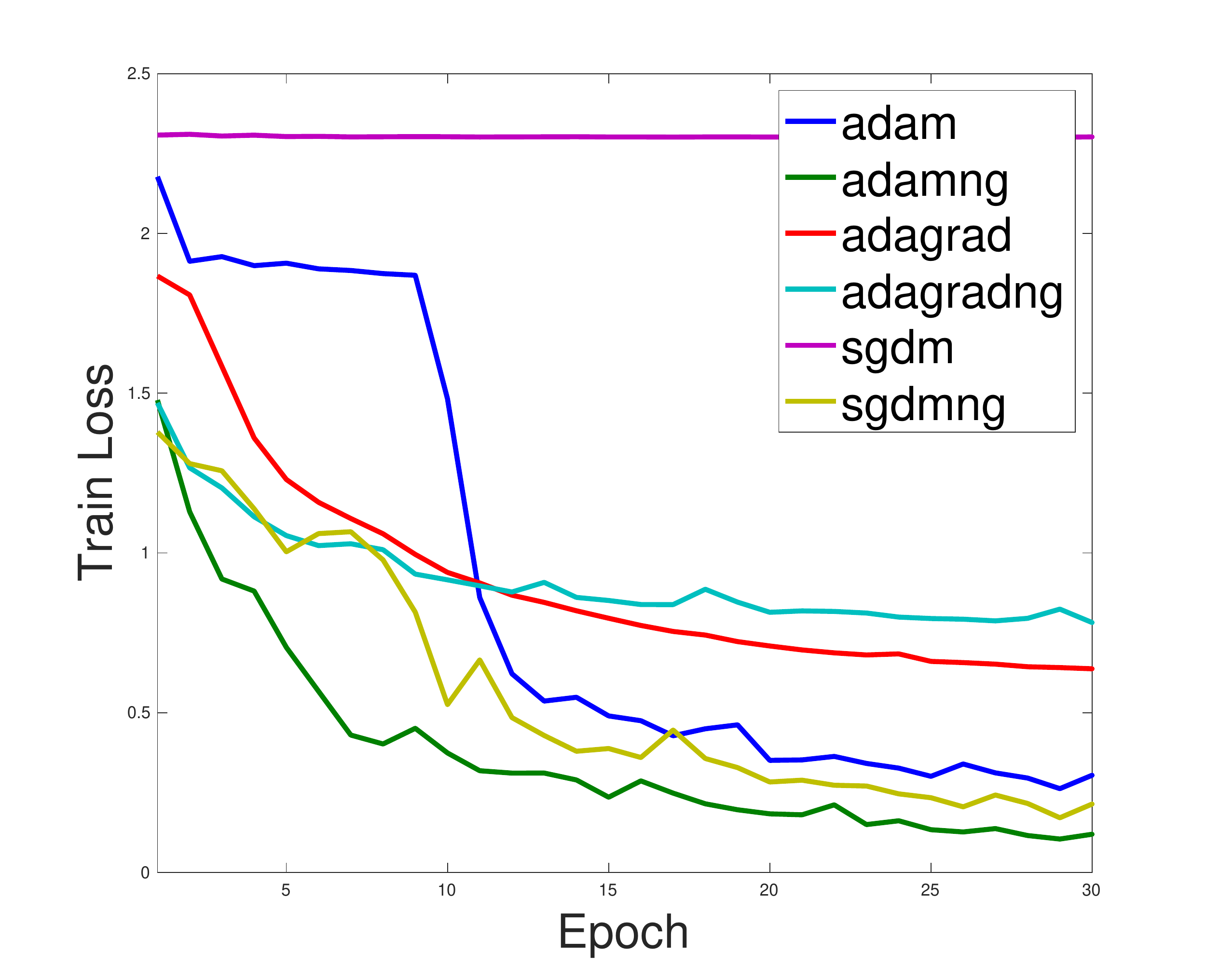}
\includegraphics[width=0.32\columnwidth]{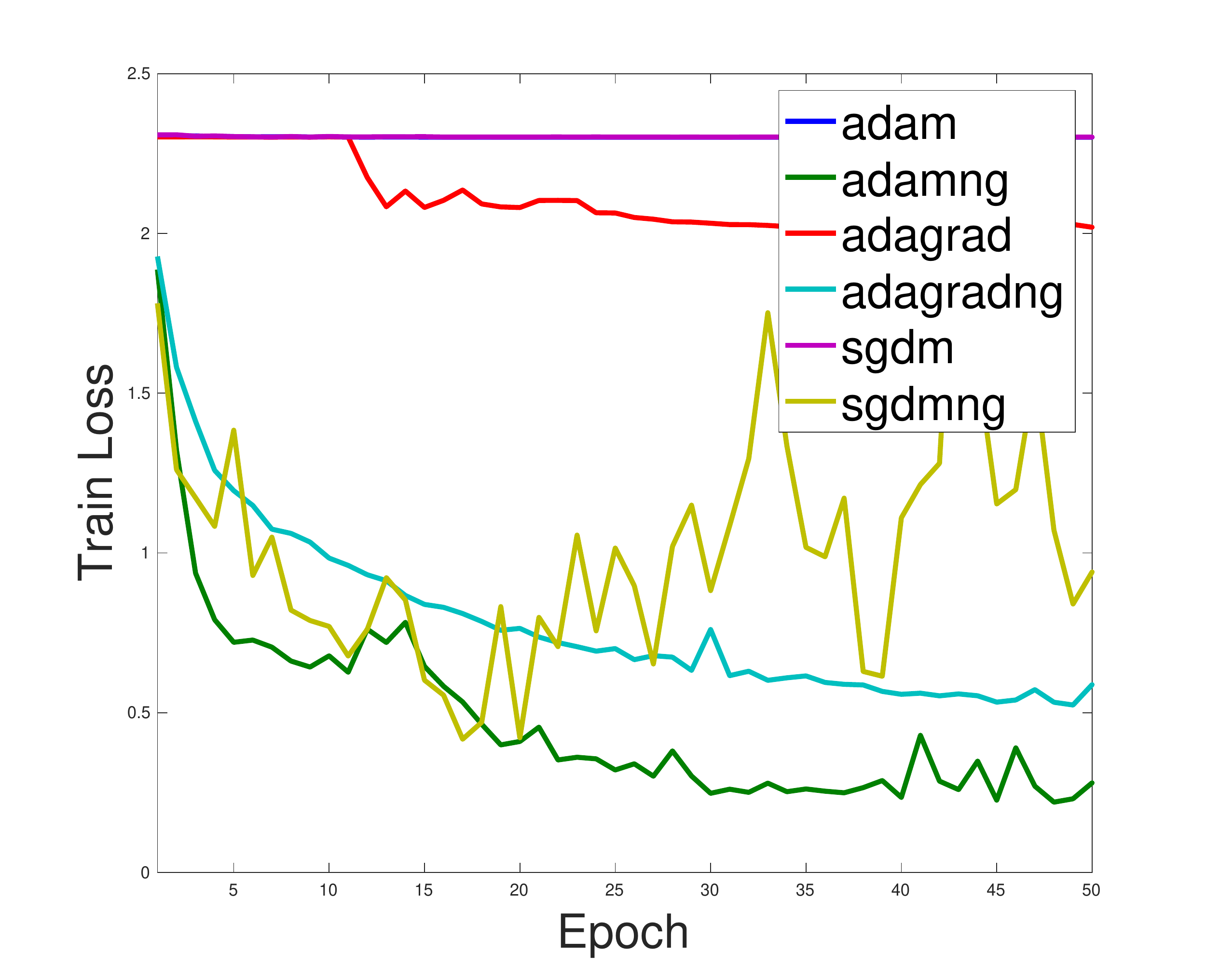}
\includegraphics[width=0.32\columnwidth]{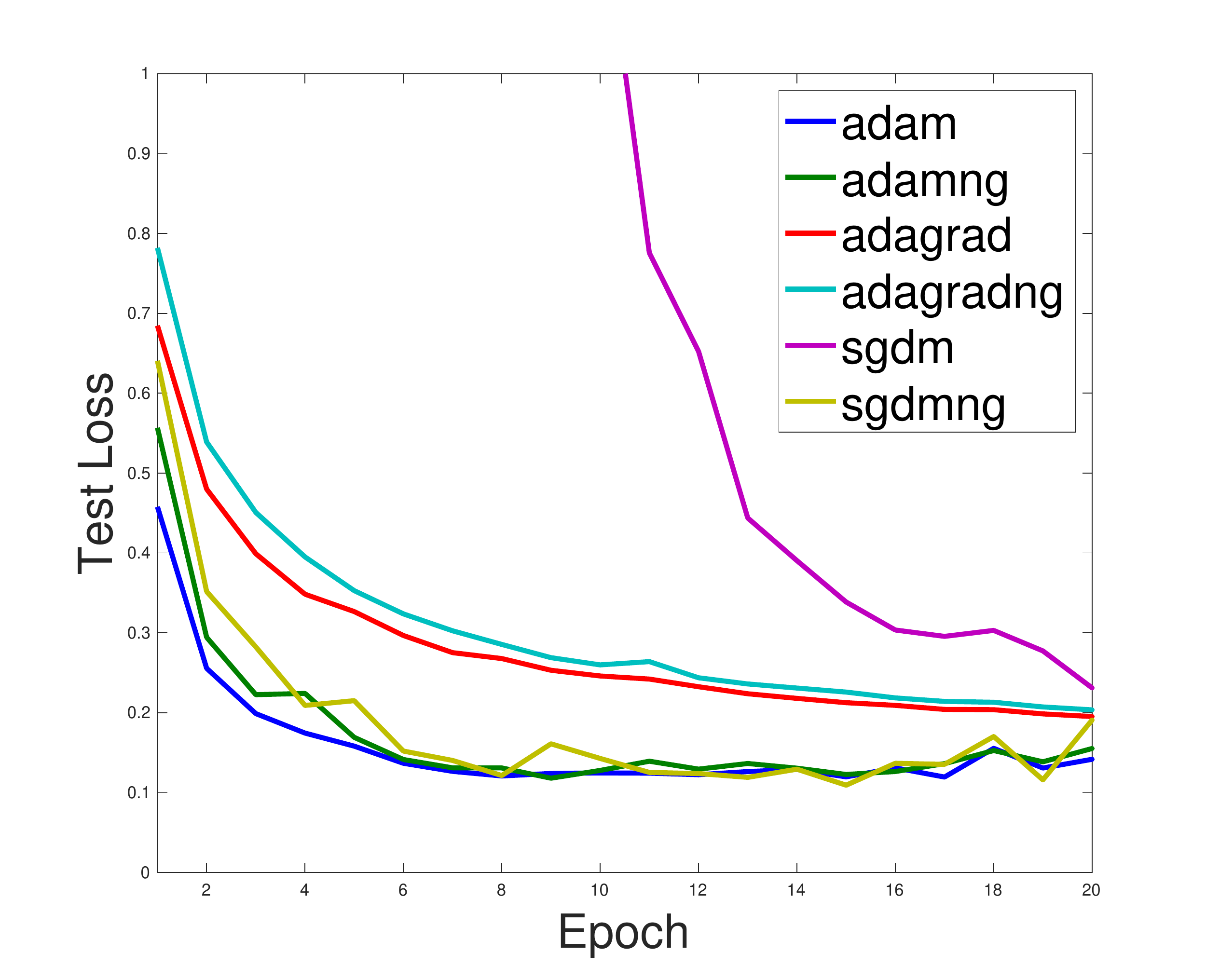}
\includegraphics[width=0.32\columnwidth]{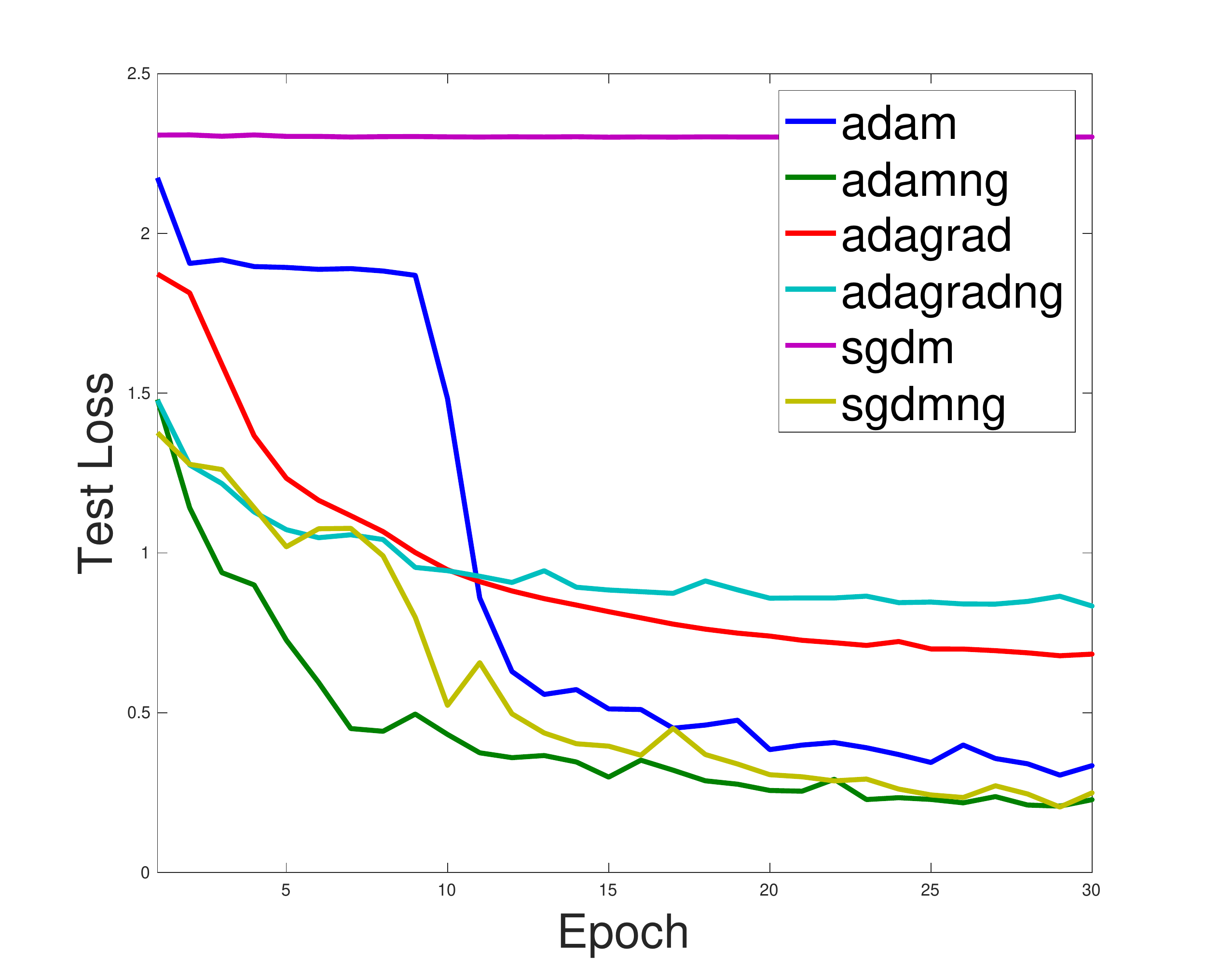}
\includegraphics[width=0.32\columnwidth]{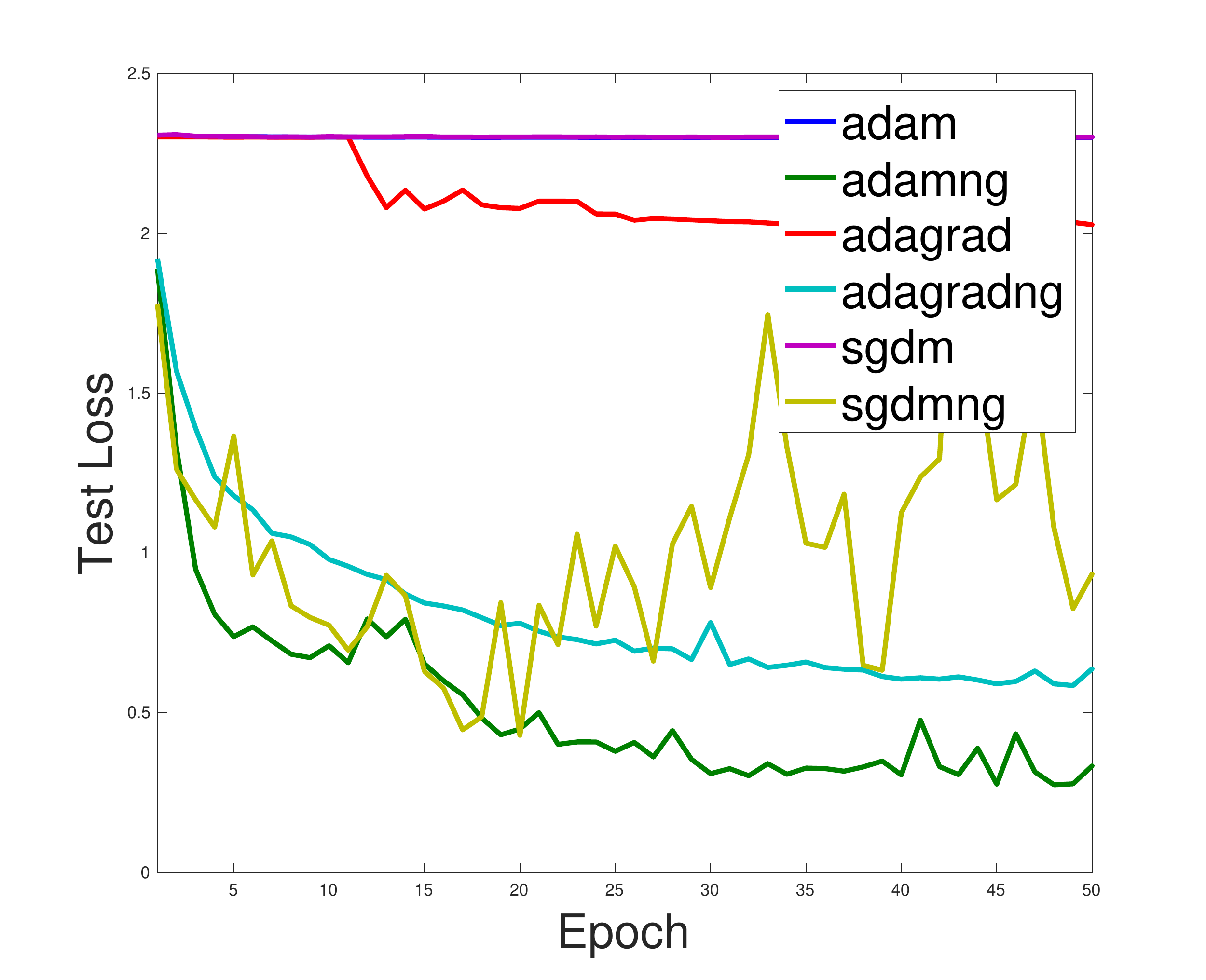}
\end{center}
\caption{The training and testing objective curves on MNIST dataset with multi layer perceptron. From left to right, the layer numbers are 6, 12 and 18 respectively. The first row is the training curve and the second is testing.}
\label{fig:mnist}
\end{figure*}

\section{Numerical Experiments}\label{sec:exp}
\paragraph{Basic Experiment Setup}
In this section, we conduct comprehensive numerical experiments on different types of neural networks.
The algorithms we are testing are SGD with Momentum (SGDM), AdaGrad~\citep{duchi2013}, Adam~\citep{KingmaB14} and their block-normalized gradient counterparts, which are denoted with suffix ``NG''. Specifically, we partition the parameters into block as $x=(x^1,x^2,\dots,x^B)$ such that $x^i$ corresponds to the vector of parameters (including the weight matrix and the bias/intercept coefficients) used in the $i$th layer in the network.


Our experiments are on four diverse tasks, ranging from image classification to natural language processing. The neural network structures under investigation include multi layer perceptron, long-short term memory and convolution neural networks.

To exclude the potential effect that might be introduced by advanced techniques, in all the experiments, we only adopt the basic version of the neural networks, unless otherwise stated. The loss functions for classifications are cross entropy, while the one for language modeling is log perplexity.
Since the computational time is proportional to the epochs, we only show the performance versus epochs. Those with running time are similar so we omit them for brevity.
For all the algorithms, we use their default settings. More specifically, for Adam/AdamNG, the initial step size scale $\alpha=0.001$, first order momentum $\beta_1=0.9$, second order momentum $\beta_2=0.999$, the parameter to avoid division of zero $\epsilon=1e^{-8}$; for AdaGrad/AdaGradNG, the initial step size scale is 0.01.

\subsection{Multi Layer Perceptron for MNIST Image Classification}\label{sec:mnist}
The first network structure we are going to test upon is the Multi Layer Perceptron (MLP). We will adopt the handwritten digit recognition data set MNIST\footnote{\url{http://yann.lecun.com/exdb/mnist/}}\cite{LecunBBH98}, in which, each data is an image of hand written digits from $\{1,2,3,4,5,6,7,8,9,0\}$. There are 60k training and 10k testing examples and the task is to tell the right number contained in the test image. Our approach is applying MLP to learn an end-to-end classifier, where the input is the raw $28\times 28$ images and the output is the label probability. The predicted label is the one with the largest probability. In each middle layer of the MLP, the hidden unit number are 100, and the first and last layer respectively contain 784 and 10 units. The activation functions between layers are all sigmoid and the batch size is 100 for all the algorithms.

We choose different numbers of layer from $\{6, 12, 18\}$.
The results are shown in Figure~\ref{fig:mnist}. Each column of the figures
corresponds to the training and testing objective curves of the MLP with a given layer number. From left to right, the layer numbers are respectively 6, 12 and 18. We can see that, when the network is as shallow as containing 6 layers, the normalized stochastic gradient descent can outperform its unnormalized counterpart, while the Adam and AdaGrad are on par with or even slightly better than their unnormalized versions. As the networks become deeper, the acceleration brought by the gradient normalization turns more significant. For example, starting from the second column, AdamNG outperforms Adam in terms of both training and testing convergence. In fact, when the network depth is 18, the AdamNG can still converge to a small objective value while Adam gets stuck from the very beginning. We can observe the similar trend in the comparison between AdaGrad (resp. SGDM) and AdaGradNG (resp. SGDMNG). On the other hand, the algorithms with adaptive step sizes can usually generate a stable learning curve. For example, we can see from the last two column that SGDNG causes significant fluctuation in both training and testing curves. Finally, under any setting, AdamNG is always the best algorithm in terms of convergence performance.

\begin{table*}[htbp]
\begin{center}
\begin{tabular}{cccccc}
\hline\hline
Algorithm &	ResNet-20	&ResNet-32 &	ResNet-44	& ResNet-56 &	ResNet-110\\\hline\hline
\multicolumn{6}{c}{Adam}\\\hline
Adam&	9.14 $\pm$ 0.07 &	8.33 $\pm$ 0.17 &7.794 $\pm$ 0.22  &7.33 $ \pm$ 0.19  & 6.75 $\pm$ 0.30  \\
AdamCLIP&10.18$\pm$ 0.16 &9.18$\pm$ 0.06   &8.89 $\pm$ 0.14   & 9.24  $\pm$ 0.19 & 9.96$ \pm$ 0.29\\
AdamNG &9.42 $\pm$ 0.20 &8.50 $\pm$ 0.17  &8.06$ \pm$ 0.20  &7.69 $\pm$ 0.19 &7.29 $\pm$ 0.08 \\
AdamNG$_\text{adap}$&\textbf{8.52$\pm$ 0.16 }&\textbf{7.62$\pm$ 0.25 }&\textbf{7.28$\pm$ 0.18 }&\textbf{7.04$\pm$ 0.27 }&\textbf{6.71$\pm$ 0.17 }\\
\hline\hline
\multicolumn{6}{c}{SGD+Momentum}\\\hline
SGDM$^*$    &	8.75             & 7.51            & 7.17              & 6.97              & 6.61$\pm$ 0.16\\\hline
SGDM     &	7.93 $\pm$ 0.15  & 7.15 $\pm$ 0.20 & 7.09 $\pm$ 0.21   & 7.34  $\pm$ 0.52  & 7.07 $\pm$ 0.65 \\
SGDMCLIP &	9.03 $\pm$ 0.15 & 8.44 $\pm$ 0.14 &	8.55 $\pm$ 0.20 &	8.30$\pm$ 0.08 &8.35$ \pm$ 0.25 \\
SGDMNG   & 7.82 $\pm$ 0.26 & 7.09 $\pm$ 0.13 & 6.60$\pm$ 0.21  & 6.59 $\pm$ 0.23  &6.28 $\pm$ 0.22 \\
SGDMNG$_\text{adap}$&\textbf{7.71 $\pm$ 0.18 }&\textbf{6.90$\pm$ 0.11 }&\textbf{6.43$\pm$ 0.03 }&	\textbf{6.19$\pm$ 0.11 }&\textbf{5.87$\pm$ 0.10 }\\\hline\hline
\end{tabular}
\caption{Error rates of ResNets with different depths  on CIFAR 10. SGDM$^*$ indicates the results reported in \cite{2015_ICCV_He} with the same experimental setups as ours, where only ResNet-110 has multiple runs.}
\label{table:cifar10}
\end{center}
\end{table*}

\subsection{Residual Network on CIFAR10 and CIFAR100}
\label{sec:exp_wideResNet}
\paragraph{Datasets}
In this section, we benchmark the methods on CIFAR (both CIFAR10 and CIFAR100) datasets with the residual networks \cite{2015_CVPR_He}, which consist mainly of convolution layers and each layer comes with batch normalization~\citep{IoffeS15}. CIFAR10 consists of 50,000 training images and 10,000 test images from 10 classes, while CIFAR100 from 100 classes. Each input image consists of $32\times 32$ pixels. The dataset is preprocessed as described in \cite{2015_CVPR_He} by subtracting the means and dividing the variance for each channel. We follow the same data augmentation  in \cite{2015_CVPR_He} that 4 pixels are padded on each side, and a 32 $\times$ 32 crop is randomly sampled from the padded image or its horizontal flip.

\paragraph{Algorithms}
We adopt two types of optimization frameworks, namely SGD and Adam\footnote{We also tried AdaGrad, but it has significantly worse performance than SGD and Adam, so we do not report its result here.}, which respectively represent the constant step size and adaptive step size methods. We compare the performances of their original version and the layer-normalized gradient counterpart.
We also investigate how the performance changes if the normalization is relaxed to not be strictly 1. In particular, we find that if the normalized gradient is scaled by its variable norm with a ratio, which we call $\text{NG}_{\text{adap}}$ and defined as follows,
\begin{align*}
\text{NG}_{\text{adap}}
&:=\text{NG}\times \text{Norm of variable} \times \alpha\\
&=\text{Grad}\times {\text{Norm of variable}\over \text{Norm of grad}} \times \alpha
\end{align*}
we can get lower testing error. The subscript ``adap'' is short for ``adaptive'', as the resulting norm of the gradient is adaptive to its variable norm, while $\alpha$ is the constant ratio. Finally, we also compare with the gradient clipping trick that rescales the gradient norm to a certain value if it is larger than that threshold. Those methods are with suffix ``CLIP''.



\begin{table*}[htbp]
\begin{center}
\begin{tabular}{cccccc}
\hline\hline
Algorithm &	ResNet-20	&ResNet-32 &	ResNet-44	& ResNet-56 &	ResNet-110\\\hline
\multicolumn{6}{c}{Adam}\\\hline
Adam&	34.44 $\pm$ 0.33 &	32.94 $\pm$ 0.16 &31.53 $\pm$ 0.13  &30.80 $ \pm$ 0.30  & 28.20 $\pm$ 0.14  \\
AdamCLIP&  38.10 $\pm$ 0.48  &  35.78 $\pm$ 0.20 &  35.41$\pm$ 0.19  &  35.62$\pm$ 0.39  &  39.10$\pm$ 0.35 \\
AdamNG &35.06 $\pm$ 0.39 &33.78 $\pm$ 0.07  &32.26$ \pm$ 0.29  &31.86 $\pm$ 0.21 &29.87 $\pm$ 0.49 \\
AdamNG$_\text{adap}$&\textbf{32.98$\pm$ 0.52 }&\textbf{31.74$\pm$ 0.07 }&\textbf{30.75$\pm$ 0.60 }&\textbf{29.92$\pm$ 0.26 }&\textbf{28.09$\pm$ 0.46 }\\
\hline\hline
\multicolumn{6}{c}{SGD+Momentum}\\\hline
SGDM &	32.28 $\pm$ 0.16 	&30.62 $\pm$ 0.36 &	29.96 $\pm$ 0.66 	&29.07 $\pm$ 0.41 &28.79 $\pm$ 0.63 \\
SGDMCLIP &	35.06 $\pm$ 0.37&	34.49$\pm$ 0.49	&33.36$\pm$ 0.36	&34.00$\pm$ 0.96  &	33.38$\pm$ 0.73\\
SGDMNG&	32.46 $\pm$ 0.37 &	31.16 $\pm$ 0.37 &	30.05 $\pm$ 0.29 	&29.42 $\pm$ 0.51 &	27.49 $\pm$ 0.25 \\
SGDMNG$_\text{adap}$&\textbf{31.43 $\pm$ 0.35 }&	\textbf{29.56 $\pm$ 0.25 }&	\textbf{28.92 $\pm$ 0.28 }&	\textbf{28.48 $\pm$ 0.19 }&	\textbf{26.72 $\pm$ 0.39 }\\
\hline\hline
\end{tabular}
\caption{Error rates of ResNets with different depths on CIFAR 100. Note that \cite{2015_ICCV_He} did not run experiment on CIFAR 100.}
\label{table:cifar100}
\end{center}
\end{table*}

\paragraph{Parameters}
In the following, whenever we need to tune the parameter, we search the space with a holdout validation set containing 5000 examples.

For SGD+Momentum method, we follow exactly the same experimental protocol as described in \cite{2015_CVPR_He} and adopt the publicly available Torch implementation\footnote{\url{https://github.com/facebook/fb.resnet.torch}} for residual network. In particular, SGD is used with momentum of 0.9, weight decay of 0.0001 and mini-batch size of 128. The  initial learning rate is 0.1 and dropped by a factor of 0.1  at 80, 120  with a total training of 160 epochs. The weight initialization is the same as \cite{2015_ICCV_He}.

For Adam, we search the initial learning rate in range $\{0.0005, 0.001, 0.005, 0.01\}$ with the base algorithm Adam.  We then use the best learning rate, i.e., 0.001, for all the related methods AdamCLIP, AdamNG,  and AdamNG$_\text{adap}$.
Other setups are the same as the SGD. In particular, we also adopt the manually learning rate decay here, since, otherwise, the performance will be much worse.

We adopt the residual network architectures with depths $L=\{20, 32, 44, 56, 110 \}$ on both CIFAR-10 and CIFAR100.
For the extra hyper-parameters, i.e., threshold of gradient clipping and scale ratio of $\text{NG}_\text{adap}$, i.e., $\alpha$, we choose the best hyper-parameter from the 56-layer residual network. In particular, for clipping, the searched values are $\{0.05, 0.1, 0.5, 1, 5\}$, and the best value is 0.1.  For the ratio $\alpha$, the searched values are $\{0.01, 0.02, 0.05\}$ and the best is 0.02.

\paragraph{Results}
For each network structure, we make 5 runs with random initialization.  We report the training and testing curves on CIFAR10 and CIFAR100 datasets with deepest network Res-110 in Figure~\ref{fig:cifar}. We can see that the normalized gradient methods (with suffix ``NG'') converge the fastest in training, compared to the non-normalized counterparts. While the adaptive version $\text{NG}_\text{adap}$ is not as fast as NG in training, however, it can always converge to a solution with lower testing error, which can be seen more clearly in Table~\ref{table:cifar10} and~\ref{table:cifar100}.

For further quantitative analysis, we report the means and variances of the final test errors on both datasets in Table~\ref{table:cifar10} and~\ref{table:cifar100}, respectively.
Both figures convey the following two messages.
Firstly, on both datasets with ResNet, SGD+Momentum is uniformly better than Adam in that it always converges to a solution with lower test error. Such advantage can be immediately seen by comparing the two row blocks within each column of both tables. It is also consistent with the common wisdom~\citep{WilsonRSSR17}. Secondly, for both Adam and SGD+Momentum, the $\text{NG}_\text{adap}$ version has the best generalization performance (which is mark bold in each column), while the gradient clipping is inferior to all the remaining variants. While the normalized SGD+Momentum is better than the vanilla version, Adam slightly outperforms its normalized counterpart. Those observations are consistent across networks with variant depths.

\begin{figure*}[hbtp]
\begin{center}
\includegraphics[width=0.45\columnwidth]{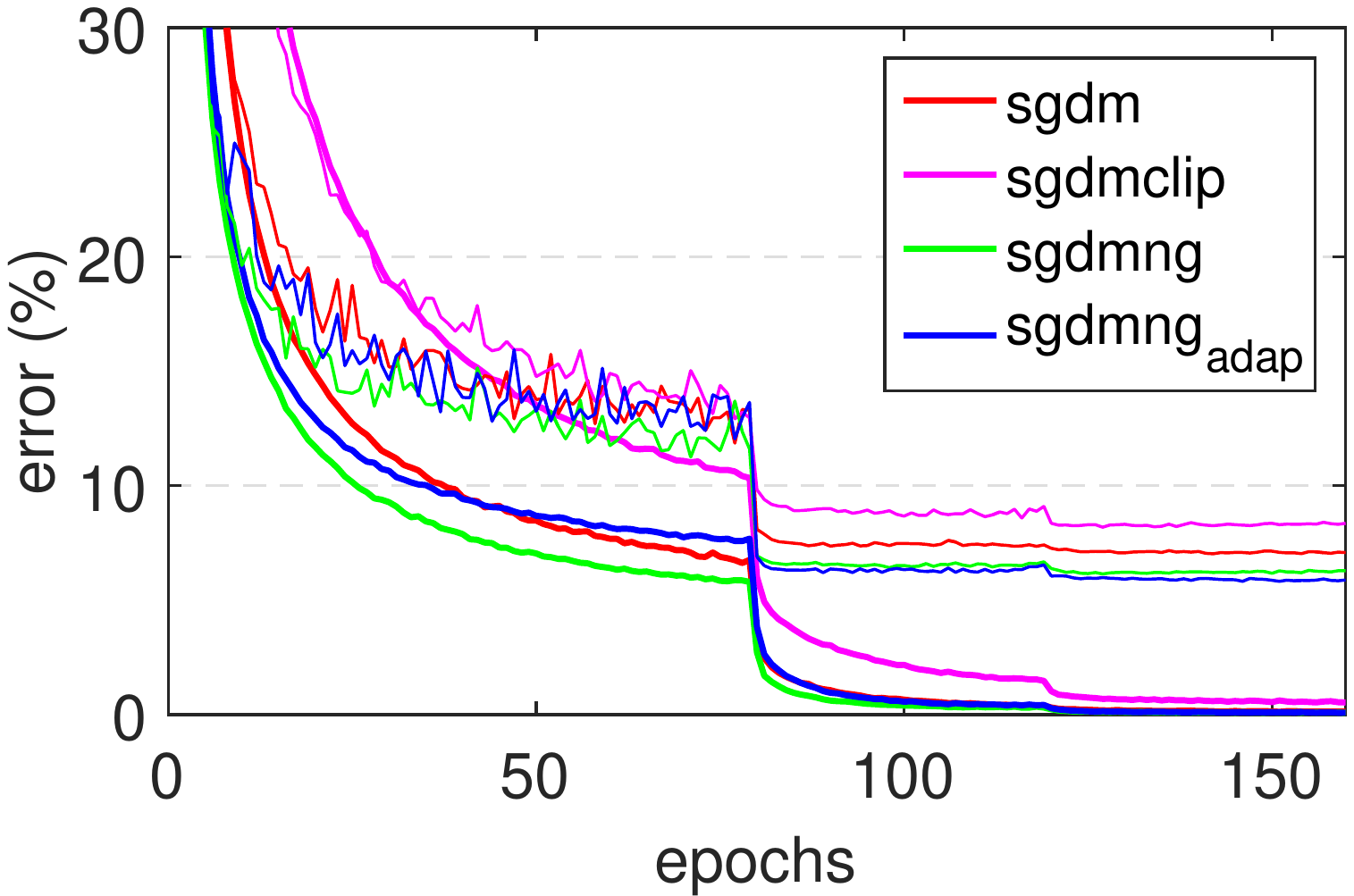}~~~~~~~~~
\includegraphics[width=0.45\columnwidth]{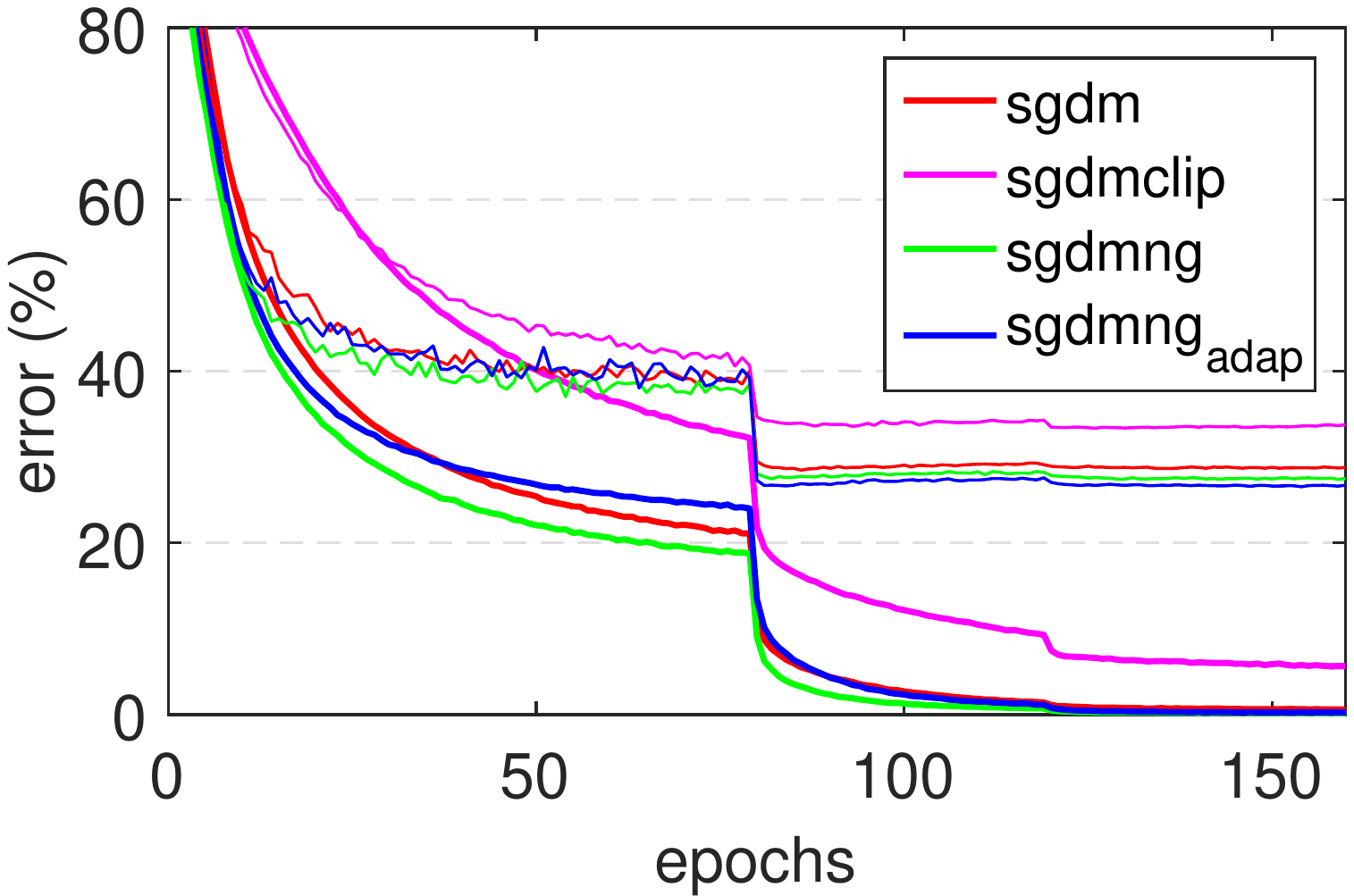}
\includegraphics[width=0.45\columnwidth]{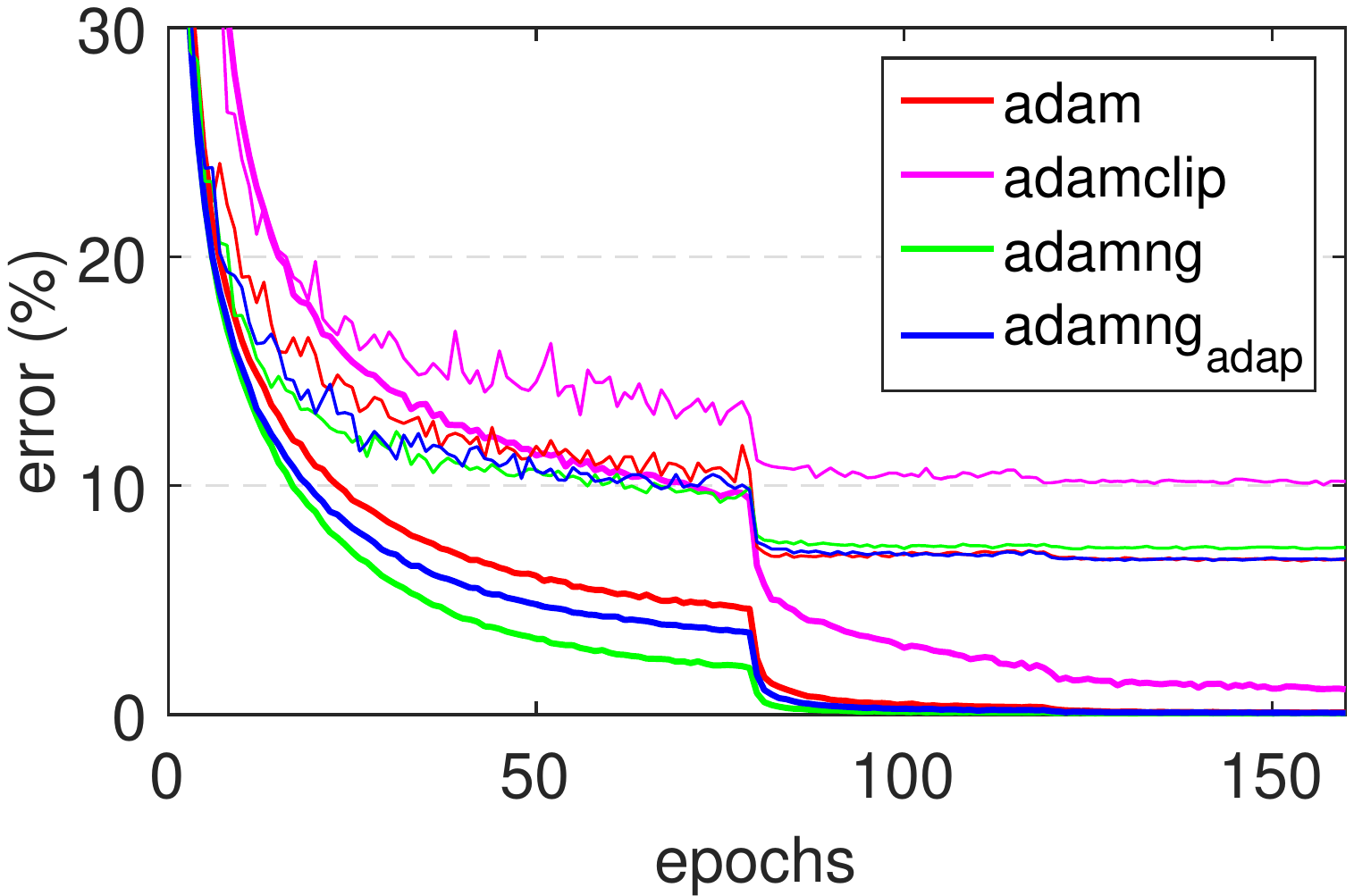}~~~~~~~~~
\includegraphics[width=0.45\columnwidth]{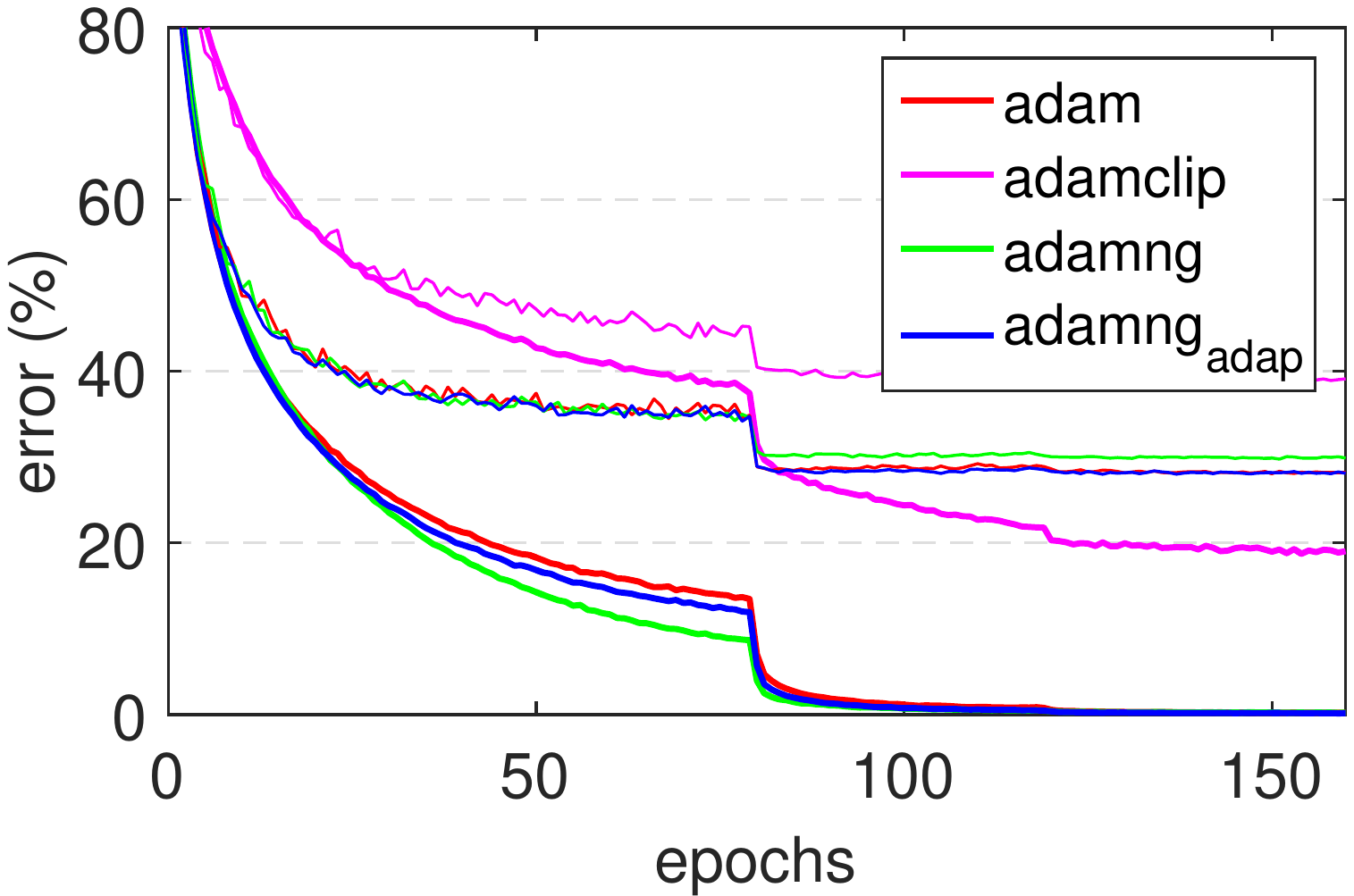}
\end{center}
\caption{The training and testing  curves on CIFAR10 and CIFAR100 datasets with Resnet-110. Left: CIFAR10; Right: CIFAR100; Upper: SGD+Momentum; Lower: Adam. The thick curves are the training while the thin are testing.}
\label{fig:cifar}
\end{figure*}


\subsection{Residual Network for  ImageNet Classification}
In this section, we further test our methods on ImageNet 2012 classification challenges, which consists of more than 1.2M images from 1,000 classes. We use the given 1.28M labeled images for training and the validation set with 50k images for testing. We employ the validation set as the test
set, and evaluate the classification performance based on top-1 and top-5 error.
 The pre-activation~\citep{2016_CoRR_He}  version of ResNet is adopted in our experiments to perform the classification task. Like the previous experiment, we again
compare the performance on SGD+Momentum and Adam.

We run our experiments on one GPU and use single scale and single crop test for simplifying discussion.
We keep all the experiments settings the same as the publicly available Torch implementation \footnote{We again use the public Torch implementation: \url{https://github.com/facebook/fb.resnet.torch}}. That is, we apply stochastic gradient descent with momentum of 0.9, weight decay of 0.0001, and set the initial learning rate to 0.1. The exception is that we use mini-batch size of 64 and 50 training epochs considering the GPU memory limitations and training time costs. Regarding learning rate annealing, we use 0.001 exponential decay.

As for Adam, we search the initial learning rate in range $\{0.0005, 0.001, 0.005, 0.01\}$. Other setups are the same as the SGD optimization framework.
Due to the time-consuming nature of training the networks (which usually takes one week)
in this experiment, we only test on a 34-layer ResNet and
compare  SGD and Adam with our default NG method on the testing error of the classification. From Table 3, we can see normalized gradient has a non-trivial improvement on the testing error over the baselines SGD and Adam. Besides, the SGD+Momentum again outperforms Adam, which is consistent with both the common wisdom~\citep{WilsonRSSR17} and also the findings in previous section.

\begin{table}[h]
\begin{center}
\begin{tabular}{ccc}
\hline\hline
method	&Top-1&	Top-5 \\\hline
Adam	 &35.6&14.09 \\
AdamNG  &	30.17&	10.51 \\
SGDM	 &29.05&	9.95 \\
SGDMNG  &	\textbf{28.43}&	\textbf{9.57}\\
\hline
\end{tabular}
\caption{Top-1 and Top 5 error rates of ResNet on ImageNet classification with different algorithms.}
\label{table:imagenet}
\end{center}
\end{table}


\begin{figure*}[hbtp]
\begin{center}
\includegraphics[width=0.32\columnwidth]{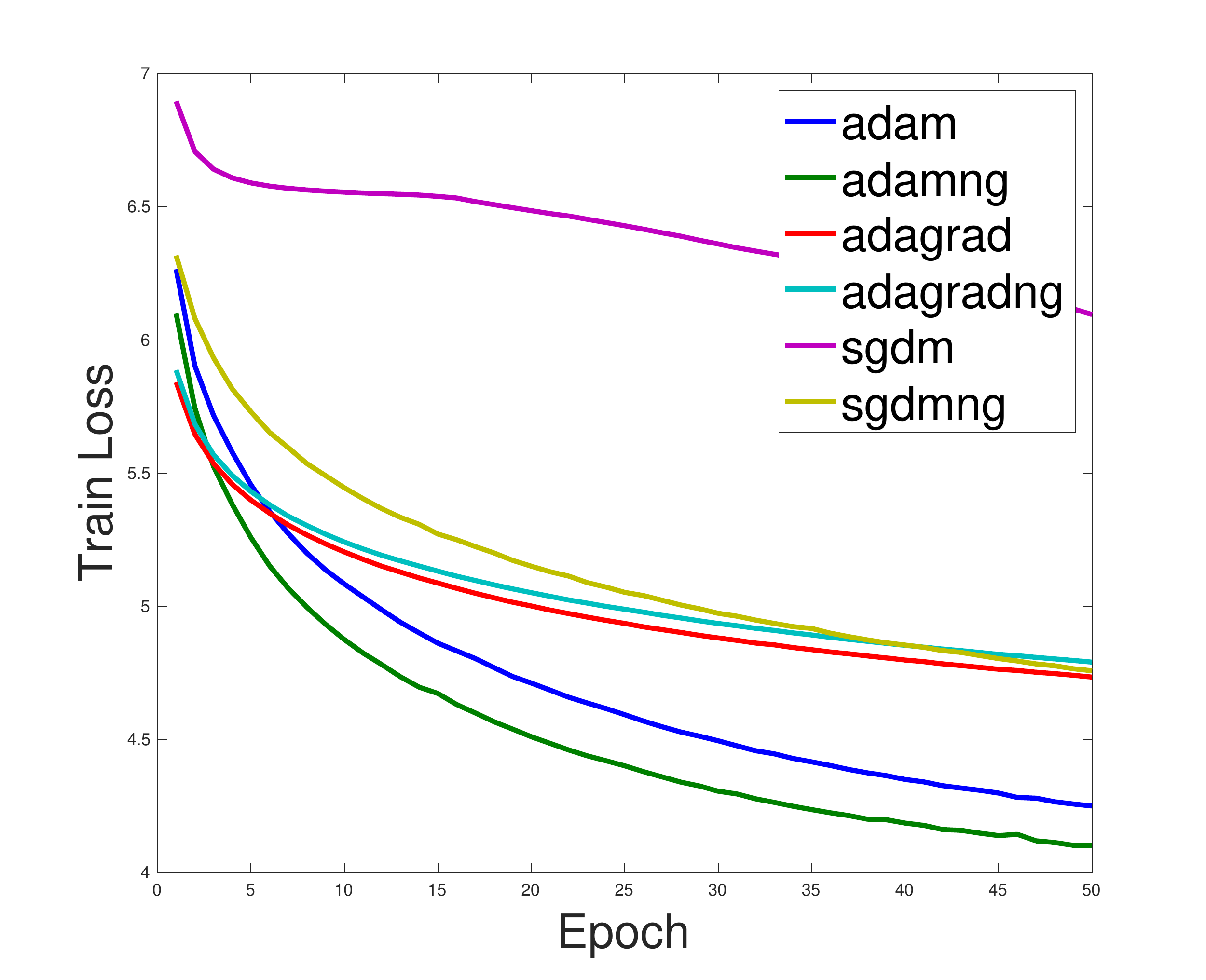}
\includegraphics[width=0.32\columnwidth]{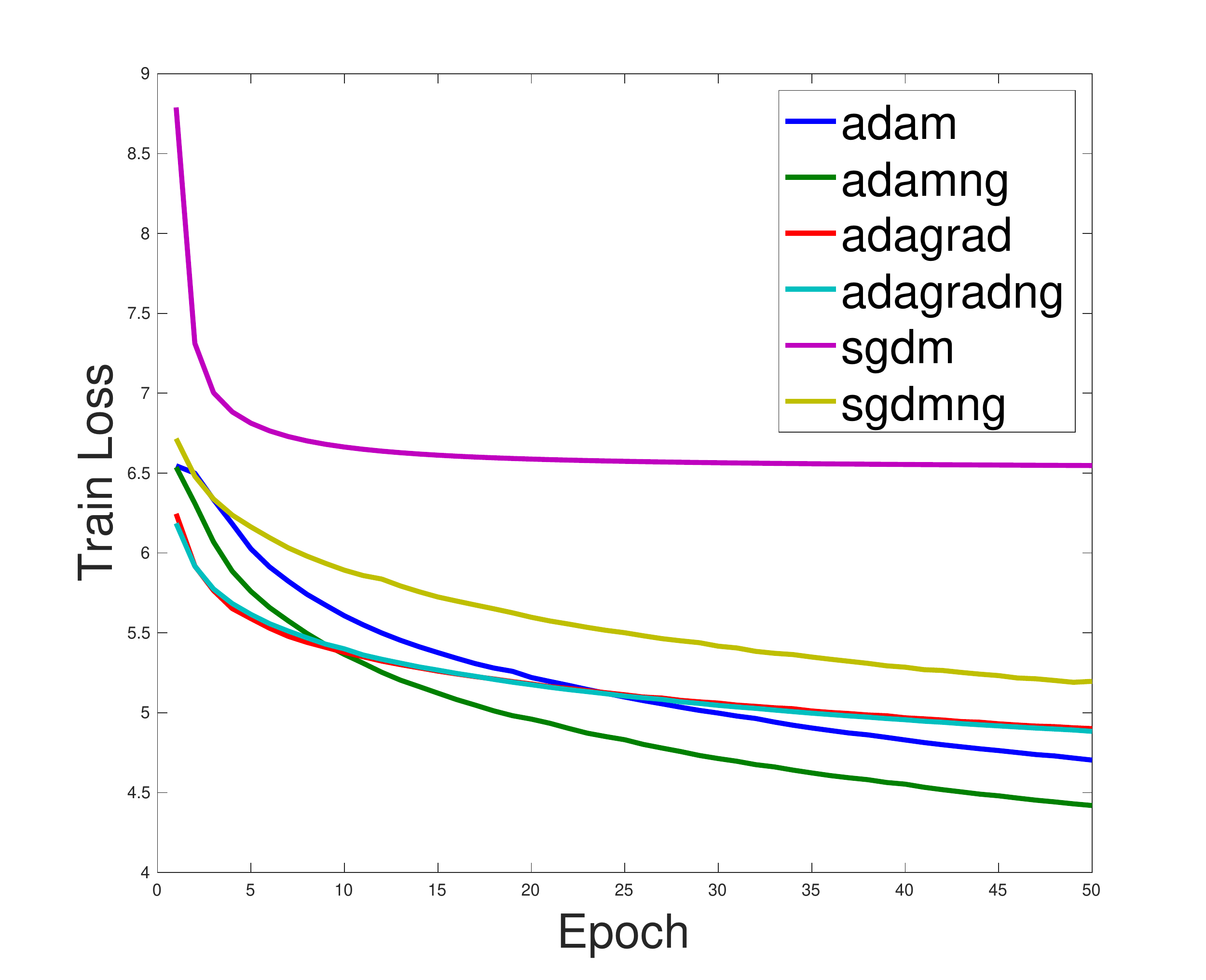}
\includegraphics[width=0.32\columnwidth]{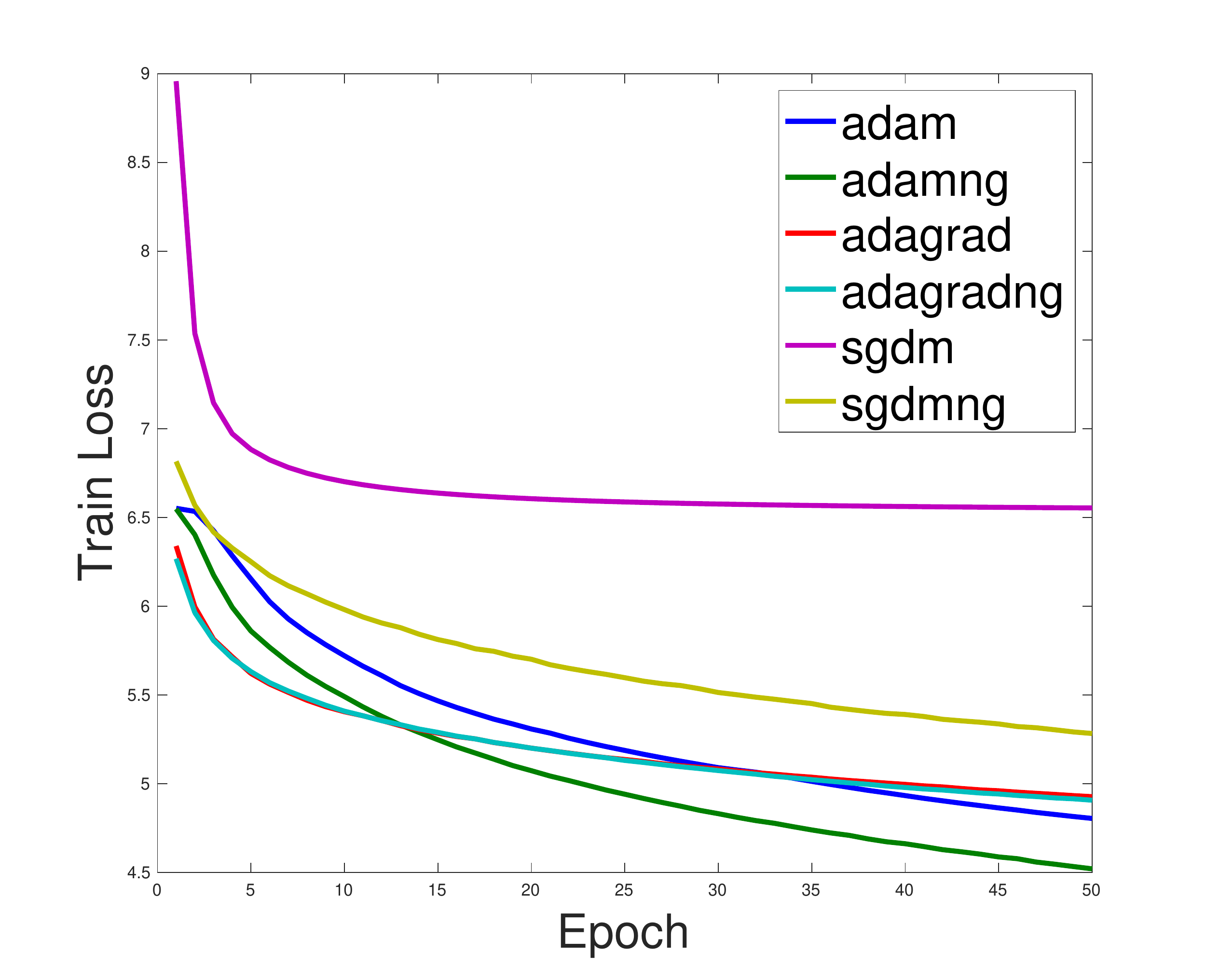}
\includegraphics[width=0.32\columnwidth]{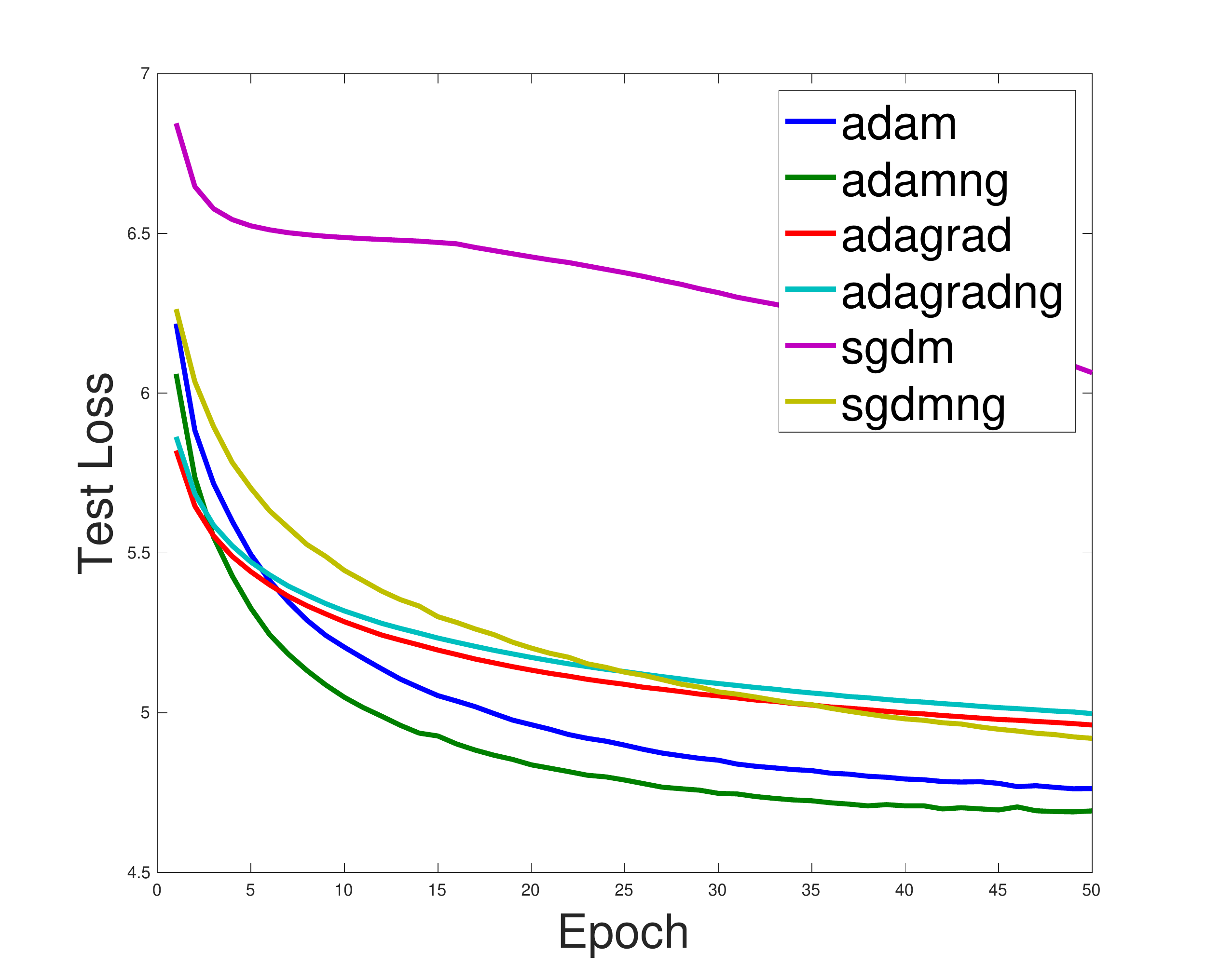}
\includegraphics[width=0.32\columnwidth]{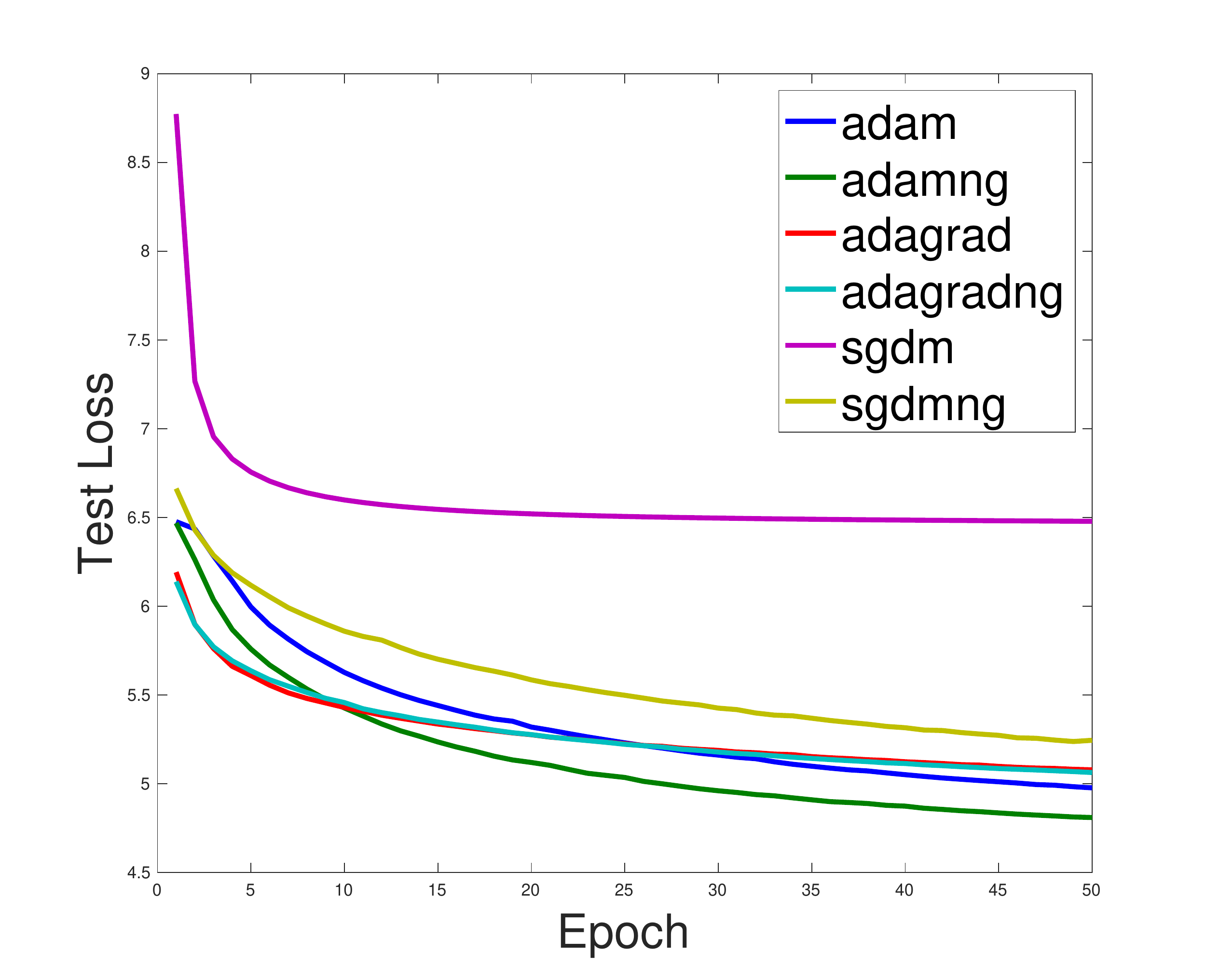}
\includegraphics[width=0.32\columnwidth]{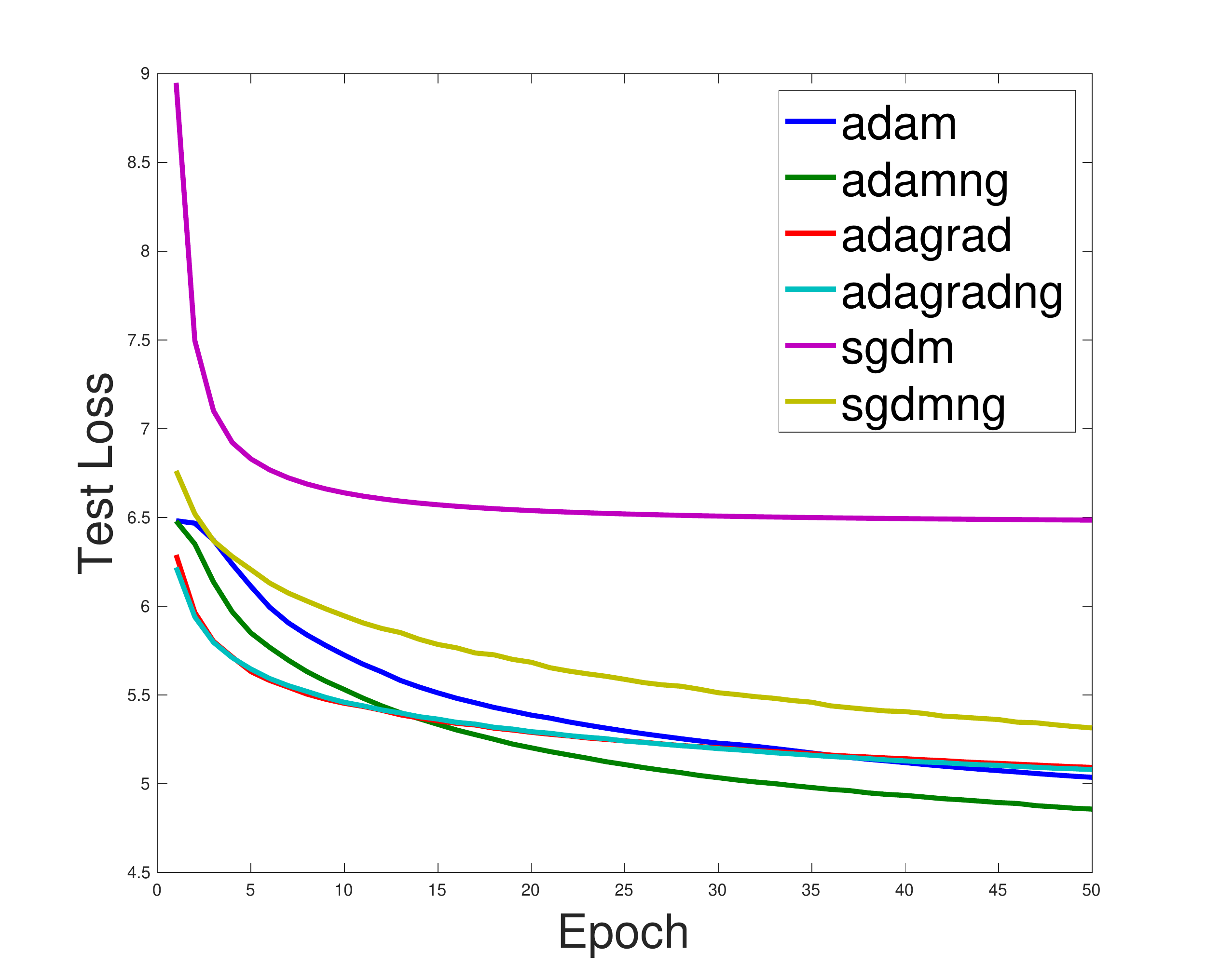}
\end{center}
\caption{The training and testing objective curves on Penn Tree Bank dataset with LSTM recurrent neural networks. The first row is the training objective while the second is the testing. From left to right, the training sequence (BPTT) length are respectively 40, 400 and 1000. Dropout with 0.5 is imposed.}
\label{fig:ptb}
\end{figure*}

\begin{table*}[t]
\begin{center}
\begin{tabular}{c|c|c|c|c|c|c}
\hline\hline
Algorithm & AdamNG & Adam & AdaGradNG & AdaGrad& SGDMNG& SGDM\\\hline
Validation Accuracy & \textbf{77.11\%} & 74.02\% & 71.95\% & 69.89\%& 71.95\%&  64.35\%\\
\hline\hline
\end{tabular}
\caption{The Best validation accuracy achieved by the different algorithms.}
\label{tb:rt}
\end{center}
\end{table*}
\subsection{Language Modeling with Recurrent Neural Network}\label{sec:ptb}
Now we move on to test the algorithms on Recurrent Neural Networks (RNN).  In this section, we test the performance of the proposed algorithm on the word-level language modeling task with a popular type of RNN, i.e. single directional Long-Short Term Memory (LSTM) networks~\citep{SeppJ97}. The data set under use is Penn Tree Bank (PTB)~\citep{MarcusSM94} data, which, after preprocessed, contains 929k training words, 73k validation and 82k test words. The vocabulary size is about 10k. The LSTM has 2 layers, each containing 200 hidden units. The word embedding has 200 dimensions which is trained from scratch. The batch size is 100. We vary the length of the backprop through time (BPTT) within the range $\{40, 400, 1000\}$. To prevent overfitting, we add a dropout regularization with rate 0.5 under all the settings.

The results are shown in Figure \ref{fig:ptb}. The conclusions drawn from those figures are again similar to those in the last two experiments. However, the slightly different yet cheering observations is that the AdamNG is uniformly better than all the other competitors with any training sequence length. The superiority in terms of convergence speedup exists in both training and testing.


\subsection{Sentiment Analysis with Convolution Neural Network}
The task in this section is the sentiment analysis with convolution neural network. The dataset under use is Rotten Tomatoes\footnote{\url{http://www.cs.cornell.edu/people/pabo/movie-review-data/}}~\cite{pang2005seeing}, a movie review
dataset containing 10,662 documents, with half positive and half negative. We randomly select around 90\% for training and 10\% for validation. The model is a single layer convolution neural network that follows the setup of \citep{kim2014convolutional}. The word embedding under use is randomly initialized and of 128-dimension.

For each algorithm, we run 150 epochs on the training data, and report the best validation accuracy in Table~\ref{tb:rt}. The messages conveyed by the table is three-fold. Firstly, the algorithms using normalized gradient achieve much better validation accuracy than their unnormalized versions. Secondly, those with adaptive stepsize always obtain better accuracy than those without. This is easily seen by the comparison between Adam and SGDM. The last point is the direct conclusion from the previous two that the algorithm using normalized gradient with adaptive step sizes, namely AdamNG, outperforms all the remaining competitors.

\section{Conclusion}\label{sec:conclusion}
In this paper, we propose a generic algorithm framework for first order optimization. Our goal is to provide a simple alternative to train deep neural networks. It is particularly effective for addressing the vanishing and exploding gradient challenge in training with non-convex loss functions, such as in the context of convolutional and recurrent neural networks. 
Our method is based on normalizing the gradient to establish the descending direction regardless of its magnitude, and then separately estimating the ideal step size adaptively or constantly. This method is quite general and may be applied to different types of networks and various architectures. Although the primary application of the algorithm is deep neural network training, we provide a convergence for the new method under the convex setting in the appendix.

Empirically, the proposed method exhibits very promising performance in training different types of networks (convolutional, recurrent) across multiple well-known data sets (image classification, natural language processing, sentiment analysis, etc.). In general, the positive performance differential compared to the baselines is most striking for very deep networks, as shown in our comprehensive experimental study.


\bibliography{ref}
\bibliographystyle{plainnat}
\newpage

\section*{Appendix}

As a concrete example for Algorithm~\ref{algo:NGAS}, we present a simple modification of AdaGrad using block-wise normalized stochastic gradient in Algorithm~\ref{algo:NGAdaGrad}, where $g_{1:t}$ is a matrix created by stacking $g_1$, $g_2$,... and $g_t$ in columns and $g_{1:t,j}\in \mathbb{R}^t$ represents the $j$th row of $g_{1:t}$ for $j=1,2,\dots,d$. Assuming the function $F$ is convex over $x$ for any $\xi$ and following the analysis in~\citep{DuchiHS11}, it is straightforward to show a $O({1\over \sqrt{T}})$ convergence rate of this modification.  
In the following, we denote $\|x\|_W:=\sqrt{x^\top W x}$ as the the Mahalanobis norm associated to a $d\times d$ positive definite matrix $W$. The convergence property of Block-Normalized AdaGrad is presented in the following theorem. 

\begin{algorithm}[htb]
	\caption{AdaGrad with Block-Normalized Gradient (AdaGradBNG)}
	\label{algo:NGAdaGrad}
	\begin{algorithmic}[1]
        \STATE Choose $x_1\in\mathbb{R}^d$, $\delta>0$ and $\eta>0$.
        \FOR {$t=1,2,...,$}
        \STATE Sample a mini-batch data $\xi_t$ and compute the stochastic partial gradient $g_t^i=\frac{F_i'(x_t,\xi_t)}{\|F_i'(x_t,\xi_t)\|_2}$
        \STATE Let $g_t=(g_t^1,g_t^2,\dots,g_t^B)$, $g_{1:t}=[g_1,g_2,\dots,g_t]$ and $s_t=(\|g_{1:t,1}\|_2,\|g_{1:t,2}\|_2,\dots,\|g_{1:t,d}\|_2)$ 
        \STATE Partition $s_t=(s_t^1,s_t^2,\dots,s_t^B)$ in the same way as $g_t$.
        \STATE Let $\tau_t=(\tau_t^1,\tau_t^2,\dots,\tau_t^B)$ with\footnote{For a vector $v=(v_1,v_2,\dots,v_n)$ with $v_j\neq 0$, we define $v^{-1}=(v_1^{-1},v_2^{-1},\dots,v_n^{-1})$.} $\tau_t^i=\eta\|F_i'(x_t,\xi_t)\|_2(\delta \mathbf{1}_{d_i}+s_t^i)^{-1}$.
        \STATE $x_{t+1}=x_t-\tau_t \circ g_t$
        \ENDFOR
	\end{algorithmic}
\end{algorithm}

\begin{theorem}\label{convergence}
	Suppose $F$ is convex over $x$, $\|F_i'(x_t,\xi_t)\|_2\leq M_i$ and $\|x_t-x^*\|_\infty\leq D_{\infty}$ for all $t$ for some constants $M_i$ and $D_{\infty}>0$ in Algorithm~\ref{algo:NGAdaGrad}. Let $H_t = \delta I_d+\text{diag}(s_t)$ and $H_t^i=\delta I_{d_i}+\text{diag}(s_t^i)$ for $t=1,2,\dots$ and $\bar x_T:=\frac{1}{T}\sum_{t=1}^Tx_t$. Algorithm~\ref{algo:NGAdaGrad} guarantees
\begin{eqnarray*}
\Eb[f(\bar x_T)-f(x^*)]&\leq&
\frac{\|x_{1}-x^*\|_{H_1}^2}{2\eta T}+\frac{D_{\infty}^2\sqrt{Bd}}{2\eta \sqrt{T}}+\sum_{i=1}^B\frac{\eta\Eb\left[M_i^2\sum_{j=d_1+d_2+\dots+d_{i-1}+1}^{d_1+d_2+\dots+d_i}\|g_{1:T,j}\|_2\right]}{T}\\
&\leq&\frac{\|x_{1}-x^*\|_{H_1}^2}{2\eta T}+\frac{D_{\infty}^2\sqrt{Bd}}{2\eta \sqrt{T}}
+\sum_{i=1}^B\frac{\eta M_i^2\sqrt{d_i}}{\sqrt{T}}.
\end{eqnarray*}
\end{theorem}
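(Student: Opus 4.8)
The plan is to follow the AdaGrad analysis of \citep{DuchiHS11}, with the normalization folded into the preconditioner. The key reduction is that the normalization cancels in the update \emph{direction}: writing $\tilde g_t:=F'(x_t,\xi_t)$ for the raw stochastic (sub)gradient and $\tilde g_t^i$ for its $i$th block, we have $g_t^i=\tilde g_t^i/\|\tilde g_t^i\|_2$ and $\tau_t^i=\eta\|\tilde g_t^i\|_2(\delta\mathbf{1}_{d_i}+s_t^i)^{-1}$, so $\tau_t^i\circ g_t^i=\eta(\delta\mathbf{1}_{d_i}+s_t^i)^{-1}\circ\tilde g_t^i$ and the iteration reads $x_{t+1}=x_t-\eta H_t^{-1}\tilde g_t$ with $H_t=\delta I_d+\mathrm{diag}(s_t)$. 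Thus Algorithm~\ref{algo:NGAdaGrad} is exactly an AdaGrad/mirror-descent step with the raw gradient $\tilde g_t$, but whose preconditioner $H_t$ accumulates \emph{normalized} gradients. Two elementary consequences will drive the bound: (i) $s_t$ is coordinatewise nondecreasing in $t$, so $H_t-H_{t-1}$ is diagonal and positive semidefinite; and (ii) each block $g_t^i$ is a unit vector, so $\|g_t\|_2^2=B$ for every $t$, whence $\sum_{j=1}^d\|g_{1:T,j}\|_2^2=\sum_{t=1}^T\|g_t\|_2^2=TB$ and, restricted to block $i$, the corresponding sum equals $T$.

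Next I would establish the standard one-step inequality. By convexity of $F(\cdot,\xi_t)$, $F(x_t,\xi_t)-F(x^*,\xi_t)\le\langle\tilde g_t,x_t-x^*\rangle$; substituting $\tilde g_t=\tfrac1\eta H_t(x_t-x_{t+1})$ and applying the three-point identity $\langle H_t(a-b),a-c\rangle=\tfrac12\bigl(\|a-b\|_{H_t}^2+\|a-c\|_{H_t}^2-\|b-c\|_{H_t}^2\bigr)$ yields
\begin{equation*}
\langle\tilde g_t,x_t-x^*\rangle\le\frac{\eta}{2}\|\tilde g_t\|_{H_t^{-1}}^2+\frac{1}{2\eta}\bigl(\|x_t-x^*\|_{H_t}^2-\|x_{t+1}-x^*\|_{H_t}^2\bigr).
\end{equation*}
Summing over $t=1,\dots,T$, the second group telescopes up to $\|x_1-x^*\|_{H_1}^2+\sum_{t=2}^T\bigl(\|x_t-x^*\|_{H_t}^2-\|x_t-x^*\|_{H_{t-1}}^2\bigr)$; by consequence (i) and $\|x_t-x^*\|_\infty\le D_\infty$ the sum is at most $D_\infty^2\sum_{j=1}^d(s_{T,j}-s_{1,j})\le D_\infty^2\sum_{j=1}^d\|g_{1:T,j}\|_2$, and Cauchy--Schwarz together with consequence (ii) bounds this by $D_\infty^2\sqrt{dTB}$. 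After dividing by $2\eta T$ these produce the first two terms of the claimed bound, namely $\|x_1-x^*\|_{H_1}^2/(2\eta T)$ and $D_\infty^2\sqrt{Bd}/(2\eta\sqrt{T})$.

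For the gradient term I would work blockwise: $\|\tilde g_t\|_{H_t^{-1}}^2=\sum_{i=1}^B(\tilde g_t^i)^\top(H_t^i)^{-1}\tilde g_t^i=\sum_{i=1}^B\|\tilde g_t^i\|_2^2\,(g_t^i)^\top(H_t^i)^{-1}g_t^i\le\sum_{i=1}^B M_i^2\,(g_t^i)^\top\mathrm{diag}(s_t^i)^{-1}g_t^i$, using $\|\tilde g_t^i\|_2\le M_i$ and dropping $\delta$ (with the usual convention that a coordinate with $s_{t,j}=0$ contributes $0$). Summing over $t$ and invoking the AdaGrad summation lemma ($\sum_{t=1}^T a_t^2/\|a_{1:t}\|_2\le 2\|a_{1:T}\|_2$, applied coordinatewise) bounds $\sum_{t=1}^T(g_t^i)^\top\mathrm{diag}(s_t^i)^{-1}g_t^i$ by twice the block-$i$ portion of $\sum_j\|g_{1:T,j}\|_2$; the factor $2$ cancels the $\eta/2$ in front, and a further Cauchy--Schwarz step with consequence (ii) turns the block-$i$ sum into $\sqrt{d_i T}$, giving after division by $T$ the final term $\eta M_i^2\sqrt{d_i}/\sqrt{T}$. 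It then remains to take expectations: conditioning on $\xi_1,\dots,\xi_{t-1}$ gives $\Eb[F(x_t,\xi_t)-F(x^*,\xi_t)]=\Eb[f(x_t)]-f(x^*)$, and convexity of $f$ (Jensen's inequality) gives $f(\bar x_T)\le\frac1T\sum_{t=1}^T f(x_t)$; summing the one-step bounds over $t$, dividing by $T$, and taking $\Eb$ yields both displayed inequalities.

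I do not anticipate a deep obstacle here — the content is essentially bookkeeping on top of the Duchi--Hazan--Singer template — but several points need care. The normalized gradient $g_t$ (appearing in the preconditioner and in the right-hand side) must be kept consistently distinct from the raw gradient $\tilde g_t$ (appearing in the update direction), and the identity $\|g_t\|_2^2=B$ must be tracked carefully through both Cauchy--Schwarz steps. One should also note that $H_1$, hence $\|x_1-x^*\|_{H_1}^2$, is itself random, so strictly speaking that term should sit under an expectation (or be further bounded, e.g.\ by $D_\infty^2(\delta d+\|g_1\|_1)$). Finally, if $F_i'(x_t,\xi_t)=0$ the direction $g_t^i$ is undefined; following \citep{HazanLS15} one sets $g_t^i=0$ in that event, which leaves all the inequalities valid.
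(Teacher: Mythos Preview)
Your proposal is correct and follows essentially the same route as the paper: both observe that the normalization cancels so that the iteration is $x_{t+1}=x_t-\eta H_t^{-1}F'(x_t,\xi_t)$, derive the per-step inequality via convexity (the paper by expanding $\|x_{t+1}-x^*\|_{H_t}^2$ directly, you via the equivalent three-point identity), telescope and bound the $H_t$-drift by $D_\infty^2\langle s_T,\mathbf{1}\rangle\le D_\infty^2\sqrt{TBd}$, and control the gradient term blockwise using the AdaGrad summation lemma together with $\|g_t^i\|_2=1$ and Cauchy--Schwarz. Your remarks that $\|x_1-x^*\|_{H_1}^2$ is itself random and that the $F_i'=0$ case needs a convention are valid refinements the paper leaves implicit.
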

\begin{proof}
It is easy to see that $H_t$ is a positive definite and diagonal matrix.
According to the updating scheme of $x_{t+1}$ and the definitions of $H_t$, $\tau_t$ and $g_t$, we have
\begin{eqnarray*}
\|x_{t+1}-x^*\|_{H_t}^2&=&(x_t-\tau_t \circ g_t-x^*)^\top  H_t(x_t-\tau_t \circ g_t-x^*)\\
&=&\|x_{t}-x^*\|_{H_t}^2-2(x_t-x^*)^\top H_t (\tau_t \circ g_t)+\|\tau_t \circ g_t\|_{H_t}^2\\
&\leq&\|x_{t}-x^*\|_{H_t}^2-2\eta(x_t-x^*)^\top F'(x_t,\xi_t)+\|\tau_t \circ g_t\|_{H_t}^2.
\end{eqnarray*}
The inequality above and the convexity of $F(x,\xi)$ in $x$ imply
\begin{eqnarray}
\nonumber
F(x_t,\xi_t)-F(x^*,\xi_t)&\leq&\frac{\|x_{t}-x^*\|_{H_t}^2}{2\eta}-\frac{\|x_{t+1}-x^*\|_{H_t}^2}{2\eta}
+\frac{\|\tau_t \circ g_t\|_{H_t}^2}{2\eta}\\\label{boundgrad}
&=&\frac{\|x_{t}-x^*\|_{H_t}^2}{2\eta}-\frac{\|x_{t+1}-x^*\|_{H_t}^2}{2\eta}
+\sum_{i=1}^B\frac{\eta \|F_i'(x_t,\xi_t)\|_2^2\|g_t^i\|_{(H_t^i)^{-1}}^2}{2}.
\end{eqnarray}
Taking expectation over $\xi_t$ for $t=1,2,\dots$ and averaging the above inequality give
\begin{eqnarray}
\nonumber
&&\mathbb{E}[f(\bar x_T)-f(x^*)]\\\nonumber
&\leq&\frac{1}{T}\sum_{t=1}^T\left[\frac{\mathbb{E}\|x_{t}-x^*\|_{H_t}^2}{2\eta}-\frac{\mathbb{E}\|x_{t+1}-x^*\|_{H_t}^2}{2\eta}\right]
+\sum_{t=1}^T\sum_{i=1}^B\frac{\eta\Eb\left[\|F_i'(x_t,\xi_t)\|_2^2\|g_t^i\|_{(H_t^i)^{-1}}^2\right]}{2T}\\\label{eq:mainineq}
&\leq&\frac{1}{T}\sum_{t=1}^T\left[\frac{\mathbb{E}\|x_{t}-x^*\|_{H_t}^2}{2\eta}-\frac{\mathbb{E}\|x_{t+1}-x^*\|_{H_t}^2}{2\eta}\right]
+\sum_{t=1}^T\sum_{i=1}^B\frac{\eta M_i^2\Eb\left[\|g_t^i\|_{(H_t^i)^{-1}}^2\right]}{2T},
\end{eqnarray}
where we use the fact that $\|F_i'(x_t,\xi_t)\|_2^2\leq M_i^2$ in the second inequality.

According to the equation (24) in the proof of Lemma 4 in \citep{DuchiHS11}, we have
\begin{eqnarray}
\label{boundgrad}
\sum_{t=1}^T\|g_t^i\|_{(H_t^i)^{-1}}^2=\sum_{t=1}^T\sum_{j=d_1+d_2+\dots+d_{i-1}+1}^{d_1+d_2+\dots+d_i}\frac{g_{t,j}^2}{\delta+\|g_{1:t,j}\|_2}\leq \sum_{j=d_1+d_2+\dots+d_{i-1}+1}^{d_1+d_2+\dots+d_i}2\|g_{1:T,j}\|_2.
\end{eqnarray} 
Following the analysis in the proof of Theorem 5 in~\citep{DuchiHS11}, we show that
\begin{eqnarray}
\label{bounddistance}
\|x_{t+1}-x^*\|_{H_{t+1}}^2-\|x_{t+1}-x^*\|_{H_t}^2&=&\left\langle x^*-x_{t+1},\text{diag}(s_{t+1}-s_t)(x^*-x_{t+1})\right\rangle\nonumber\\
&\leq&D_{\infty}^2\|s_{t+1}-s_t\|_1= D_{\infty}^2\left\langle s_{t+1}-s_t, \textbf{1}\right\rangle.
\end{eqnarray}
After applying \eqref{boundgrad} and \eqref{bounddistance} to \eqref{eq:mainineq} and reorganizing terms, we have
\begin{eqnarray*}\label{eq:a}
&&\mathbb{E}[f(\bar x_T)-f(x^*)]\\\nonumber
&\leq&\frac{\|x_{1}-x^*\|_{H_1}^2}{2\eta T}+\frac{D_{\infty}^2\mathbb{E}\left\langle s_{T}, \textbf{1}\right\rangle}{2\eta T}
+\sum_{i=1}^B\frac{\eta\Eb\left[M_i^2\sum_{j=d_1+d_2+\dots+d_{i-1}+1}^{d_1+d_2+\dots+d_i}\|g_{1:T,j}\|_2\right]}{T}\\
&\leq&\frac{\|x_{1}-x^*\|_{H_1}^2}{2\eta T}+\frac{D_{\infty}^2\sqrt{Bd}}{2\eta \sqrt{T}}
+\sum_{i=1}^B\frac{\eta\Eb\left[M_i^2\sum_{j=d_1+d_2+\dots+d_{i-1}+1}^{d_1+d_2+\dots+d_i}\|g_{1:T,j}\|_2\right]}{T},
\end{eqnarray*}
where the second inequality is because $\left\langle s_{T}, \textbf{1}\right\rangle=\sum_{j=1}^d\|g_{1:T,j}\|_2\leq\sqrt{TBd}$ which holds due to Cauchy-Schwarz inequality and the fact that $\|g_t^i\|_2=1$. Then, we obtain the first inequality in the conclusion of the theorem. 
To obtain the second inequality, we only need to observe that 
\begin{eqnarray}
\sum_{j=d_1+d_2+\dots+d_{i-1}+1}^{d_1+d_2+\dots+d_i}\|g_{1:T,j}\|_2\leq\sqrt{Td_i}
\end{eqnarray}
which holds because of Cauchy-Schwarz inequality and the fact that $\|g_t^i\|_2=1$. 
\end{proof}
\begin{remark}
When $B=1$, namely, the normalization is applied to the full gradient instead of different blocks of the gradient, the inequality in Theorem~\ref{convergence} becomes 
\begin{eqnarray*}
	\Eb[f(\bar x_T)-f(x^*)]
	&\leq&\frac{\|x_{1}-x^*\|_{H_1}^2}{2\eta T}+\frac{D_{\infty}^2\sqrt{d}}{2\eta \sqrt{T}}
	+\frac{\eta M^2\sqrt{d}}{\sqrt{T}},
\end{eqnarray*}
where $M$ is a constant such that $\|F'(x_t,\xi_t)\|_2\leq M$. Note that the right hand side of this inequality can be larger than that of the inequality in Theorem~\ref{convergence} with $B>1$. We use $B=2$ as an example. Suppose $F_1'$ dominates in the norm of $F'$, i.e., $M_2\ll M_1\approx M$ and $d_1\ll d_2\approx d$, we can have $\sum_{i=1}^BM_i^2\sqrt{d_i}=O(M^2\sqrt{d_1}+M_2^2\sqrt{d})$ which can be much smaller than the factor $M^2\sqrt{d}$ in the inequality above, especially when $M$ and $d$ are both large. Hence, the optimal value for $B$ is not necessarily one. 
\end{remark}

\end{document}